\PassOptionsToPackage{table}{xcolor}
\documentclass{artificer}
\usepackage{amsmath,amssymb,amsfonts}
\usepackage{marvosym}
\usepackage{pifont}
\usepackage{multirow}

\usepackage{amsmath,amsfonts,bm}

\def\eqref#1{equation~\ref{#1}}
\def\Eqref#1{Equation~\ref{#1}}

\def\1{\bm{1}}

\DeclareMathAlphabet{\mathsfit}{\encodingdefault}{\sfdefault}{m}{sl}
\SetMathAlphabet{\mathsfit}{bold}{\encodingdefault}{\sfdefault}{bx}{n}

\renewcommand{\Eqref}[1]{Eq.~(\ref{#1})}

\definecolor{AscentIndigo}{HTML}{8800C2}
\definecolor{AscentTint}{HTML}{EEEDFA}
\definecolor{AscentTintLight}{HTML}{F7F6FC}
\newcommand{\ascentmethod}[1]{\textbf{#1}}

\usepackage{url}
\usepackage{makecell}
\usepackage{amsthm}
\usepackage{float}
\usepackage{wrapfig}
\usepackage{flafter}
\usepackage{enumitem}
\usepackage{needspace}
\newtheorem{theorem}{Theorem}[section]
\newtheorem{lemma}[theorem]{Lemma}

\theoremstyle{definition}

\title{\raisebox{-0.12em}{\includegraphics[height=0.95em]{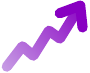}}\,ASCENT: Online Test-Time Training of Long-Horizon Agents via Self-Distillation of Verified Experience}

\author{Haodong Lu}
\author{Dong Gong}
\affiliation{University of New South Wales (UNSW Sydney)}
\emails{\email{haodong.lu@unsw.edu.au}, \email{dong.gong@unsw.edu.au}}

\abstract{
A large language model (LLM) agent solves long-horizon tasks through many
reasoning-action turns, with one verification signal at termination.
Deployed agents face streams of related tasks, making their own trajectories
a natural resource for improvement. In-context adaptation agents store
reflections, memories, or skills as text at the harness level, so reuse depends on retrieving the
right experience and on a frozen policy executing it. We study
\emph{Online Agentic Test-Time Training} (OaTTT), which trains the LLM's weights on the agent's own execution trajectories during
deployment. The agent executes each task once in one pass over the task stream,
and the executed trajectory with its verification result is the only and immediate learning
signal for weight updates that persist across tasks. Directly imitating or
reinforcing the generated tokens of this single attempt destabilizes the policy. We introduce
ASCENT (\emph{Agentic Self-distillation for Cross-task EvolutioN at Test-time}),
which instead self-distills verified experience.
A stable version of the LLM, its frozen initial copy, receives the verifier-accepted trajectory as privileged information and predicts next-token distributions along it with this hindsight. Distilling them into persistent LoRA fast weights updates the agent for later tasks, without an external reference solution or stronger teacher.
By further removing invalid-action turns from the trajectories, ASCENT can distill
enhanced privileged experience for more effective and efficient execution.
We characterize the population target of this distillation and the limits of sparse outcome selection.
Across ALFWorld, WebShop, and AppWorld on varied model scales, ASCENT improves task success and interaction
efficiency as experience accumulates, outperforms the compared online
adaptation methods, and transfers to held-out scenes, showing that a deployed
agent can consolidate its own verified experience into its policy without a
separate training phase or memory retrieval at inference.
}

\metadata[Project Page]{\href{https://artificer-ai-lab.github.io/ASCENT/}{Artificer-AI-Lab/ASCENT}}

\begin{document}
\raggedbottom

\maketitle

\section{Introduction}

A large language model (LLM) agent interacts with its environment to
accomplish a task, iteratively reasoning about the current state and taking
actions such as tool calls~\citep{yao2022react}. The LLM serves as the core
of the agent's decision policy. Agent tasks are often long-horizon, requiring multi-turn interactions with complex environments. Each completed
interaction forms one episode, a trajectory of reasoning, actions, and
observations.

Deployed agents often operate in persistent, long-running roles,
encountering a continuous stream of tasks and interactions with users and
environments. Because real-world tasks are complex and diverse, an agent built
on a frozen LLM may not handle every situation it faces. The
trajectories the agent gathers in the task stream are a natural resource for improving the agent at test time, as experience from earlier tasks can enhance performance on related tasks encountered later.

Relying on the in-context learning capability of LLMs, one line of work
stores reflections on completed episodes, memories, and reusable skills
as text at the harness level and reuses them in later
interactions~\citep{shinn2023reflexion,zhao2023expel}. Online variants
accumulate this experience across tasks during
deployment~\citep{ouyang2025reasoningbank,zhang2025ace,fang2025memp,zhang2026memrl}.
Such in-context adaptation requires no parameter updates and changes
only the model's input, keeping the model itself frozen. Its gains can therefore be
limited by how well relevant experience or skill is retrieved and how reliably
the frozen policy executes it over a long interaction.
Efforts are thus also taken to improve the core LLM for the agentic task process, via parametric adaptation on model weights.
Some test-time training (TTT) methods adapt the LLM's parameters within a
single episode and reset them afterward, training on self-generated
supervision (aTTT~\citep{wang2026attt}) or
QA pairs from the episode's accumulated context (TMEM~\citep{ren2026tmem}).
Other TTT methods require pre-deployment training of the adaptation
behavior~\citep{zweiger2025seal} or multiple sampled
attempts per problem~\citep{zuo2025ttrl,wang2026ttpo}.
Reinforcement learning (RL) post-training also improves the agentic
capability of LLMs, which optimizes the policy before deployment on a
training task set, typically with many attempts per
task~\citep{shao2024deepseekmath,feng2025gigpo,tan2026hcapo,wu2026seed,lu2026sdar} or single-attempt RL with a learned critic~\citep{hou2026sao}.
Thus, none of them adapts a long-horizon agent's weights at test time, in a single pass over the task stream with one attempt per task, while carrying updates across tasks.

\Needspace{8\baselineskip}
To explore the potential of evolving LLMs through test-time training in
deployment, we study \emph{Online Agentic Test-Time Training} (OaTTT), which
trains the LLM's weights directly on the agent's own execution trajectories to
improve task success.
Tasks arrive in a single-pass stream, and the agent executes the task once, receiving verification from environment.
Each executed trajectory and its verification result drive weight updates that persist across following tasks.
Like in-context adaptation agents~\citep{ouyang2025reasoningbank,zhang2025ace}, the agent learns from each trajectory once, without multiple trails. It accumulates learning in LLM weights, whereas those agents use textual harness-level memory. Learning occurs entirely at test time, without task-specific or offline training. We ask:

\begin{center}
  \begin{minipage}{0.7\linewidth}
  \centering
  \emph{{Can a deployed agent turn its own verified execution trajectories,
  one per task, into persistent weight updates that improve its success on
  subsequent tasks?}}
  \end{minipage}
\end{center}

  We propose ASCENT (\emph{Agentic Self-distillation for Cross-task
EvolutioN at Test-time}), which enables agents to learn during deployment, turning each verified experience trajectory into a persistent weight update.
Training directly on these streaming trajectories is straightforward but
fragile, because each task yields only one attempt by the current policy.
Directly imitating or reinforcing the verified trajectories, with direct imitation or single-attempt policy gradient (Sec.~\ref{sec:direct_ntp}), destabilizes the
policy, erodes executable behavior (Fig.~\ref{fig:ntp_collapse}), and lowers
task success.
ASCENT instead \emph{self-distills from the verified experience} to enable stable online test-time training.
A stable version of the LLM, its frozen initial copy, receives the
verifier-accepted trajectory as privileged information and, at every token
position of that trajectory, predicts a full next-token distribution with this
hindsight.
Environment verification acts as rejection sampling, providing self-distillation positive guidance over the full horizon.
Distilling it into persistent LoRA fast weights absorbs selected experience without treating every generated token as a direct target~\citep{hu2021lora,agarwal2023gkd,zhao2026opsd}.
ASCENT thus distills hindsight from verified experience into the LLM weights, using this stable version of the LLM as the teacher for stable training.
Beyond raw verified experiences, ASCENT can further refine accepted trajectories by removing invalid-action \emph{turns}, while retaining the reasoning-action content of valid turns as privileged information. Filtering changes only the teacher input. The agent is still trained at removed-turn positions.
The filtered trajectories provide the agent with more effective and efficient execution paths.
The execution of the next task is based on the updated agent, enabling verification-gated online self-evolution. Across tasks and model scales, ASCENT improves task success and interaction efficiency as experience accumulates, while transferring learned weights to held-out scenes in the stream (Fig.~\ref{fig:ascent_overview}).
Our contributions are as follows:
\begin{itemize}[leftmargin=*]
    \item We study OaTTT, in which
    a deployed agent executes each task once in one pass over its task
    stream, and the executed trajectory with its verification result is the only
    data for weight updates that persist across tasks. In OaTTT, we identify the instability of directly imitating or reinforcing these trajectories.
    \item We propose ASCENT, which self-distills verified experience for
    stable online test-time training. A stable version of the LLM, its frozen initial copy, conditioned on
    the verifier-accepted trajectory as privileged information, with invalid-action turns further removed,
    supplies hindsight-informed full-distribution targets that are distilled into persistent LoRA
    fast weights. We characterize the population target of this distillation
    and the limits of sparse outcome selection.
    \item We validated ASCENT across different types of tasks/steams, across  ALFWorld, WebShop, and AppWorld on varied model scales.
    ASCENT successfully improves task success and interaction efficiency with experience, outperforms the compared online adaptation baselines, and transfers to held-out scenes with effective continued test-time training. The influence of prevailed information and distillation divergence are investegated. We also
    analyzed in-context adaptation failure modes, and
    present a preliminary in-context and parametric co-evolution study with ASCENT.
\end{itemize}

\section{Related Work}
\label{sec:related_work}

\noindent\textbf{Self-evolving agents.}
In-context methods store reflections, memories, or reusable skills and retrieve
them at
inference~\citep{shinn2023reflexion,zhao2023expel,wang2023voyager,
suzgun2025dynamiccheatsheet,ouyang2025reasoningbank,zhang2025ace,fang2025memp,
xu2025amem,zhang2026memrl}, so reuse depends on retrieval and on the frozen policy
executing it over a long interaction. Parametric methods internalize experience
in weights~\citep{zweiger2025seal,ren2026tmem}. ASCENT consolidates
verifier-selected interactions into persistent LoRA fast weights that carry
across tasks.

\noindent\textbf{Online learning and test-time training.}
Online learning updates a model from sequentially arriving
data~\citep{shalev2012online}. Test-time training adapts parameters on
inference-time
signals~\citep{sun2019ttt,wang2020tent,wang2022cotta,hardt2023ttnn,hu2025ttl},
though Agentic TTT and TT-SI reset parameters after each episode or
instance~\citep{wang2026attt,acikgoz2025selfimprovingagents}. Existing
LLM self-training and RL methods require multiple solutions,
correctness-selected generations, or pre-deployment
attempts~\citep{zuo2025ttrl,wang2026ttpo,
zelikman2022star,yuan2023rft,gulcehre2023rest,singh2023restem,
shao2024deepseekmath,kim2026whale,hou2026sao}. ASCENT persists updates across
tasks from one attempt each, selecting by the environment's verification outcome and
distilling frozen-teacher soft targets.

\noindent\textbf{On-policy distillation and self-distillation.}
On-policy distillation queries a teacher on student-sampled sequences,
reducing train--generation
mismatch~\citep{agarwal2023gkd}. OPSD conditions a self-teacher on reference
solutions while the student receives only the
problem~\citep{zhao2026opsd}. SDFT conditions the teacher on demonstrations,
and SDPO on environment feedback or the model's own successful
attempts~\citep{shenfeld2026sdft,hubotter2026sdpo}. Recent agentic methods
use privileged or hindsight-conditioned teacher guidance, mostly before
deployment and often combined with
RL~\citep{wang2026skillsd,lu2026sdar,wu2026seed,wang2026agentopsd,liu2026smcsd,chen2026hintsd}.
CLaaS studies online self-distillation with asynchronous
replay~\citep{yang2026claas}.
Like OPSD, ASCENT keeps the teacher fixed at the initial model and
distills at test time, without replay, immediately after each
verifier-accepted interaction of the student.

\begin{figure}[t!]
\centering
\includegraphics[width=\linewidth]{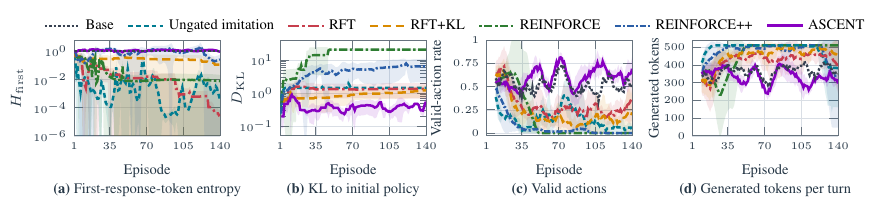}
\caption{\textbf{Imitation and reinforcement lose executable behavior.} Online
per-episode updates (Sec.~\ref{sec:direct_ntp}) of Qwen3.5-4B on the
ALFWorld seen stream, with std shading: (a) first-response-token entropy, (b) KL
to the initial policy, (c) valid-action rate, and (d) generated tokens per turn.}
\label{fig:ntp_collapse}
\end{figure}

\section{Method}
\label{sec:method}

\subsection{Online Agentic Test-Time Training (OaTTT)}
\label{sec:method_setting}

\noindent\textbf{Self-evolving agent-environment interaction.}
At deployment, the agent encounters a one-pass task stream
$\mathcal X_{1:N}=(x_1,\ldots,x_N)$, where each $x_i$ specifies a goal in a
partially observed interactive environment $\mathcal E$. We call the
agent's single attempt at task $x_i$ episode $i$. At turn $t$ of episode $i$,
the agent conditions on
\begin{equation}
H_{i,t}=(x_i,o_{i,1},w_{i,1},o_{i,2},\ldots,
w_{i,t-1},o_{i,t}),
\end{equation}
where $o_{i,t}$ is the current observation and $w_{i,t}$ is the generated
reasoning-action response~\citep{yao2022react}. An action parser $\rho$
extracts the action $a_{i,t}=\rho(w_{i,t})$ from each response, and no action
results when parsing fails.
Let $\mathcal M_i$ denote an optional,
evolving cross-task-episode memory, such as reflections, memories, or skills
accumulated from earlier
episodes~\citep{ouyang2025reasoningbank,zhang2025ace,fang2025memp}. The deployed
policy is
\begin{equation}
\pi_i(\cdot\mid H_{i,t})
:=p_{\theta_i}\bigl(\cdot\mid g(H_{i,t};\mathcal M_i)\bigr),
\qquad
w_{i,t}\sim\pi_i(\cdot\mid H_{i,t}).
\label{eq:behavior_policy}
\end{equation}
Here $g$ renders the instructions, history, and content from $\mathcal M_i$. The environment executes $a_{i,t}$ and returns
$o_{i,t+1}\sim\mathcal E(\cdot\mid H_{i,t},a_{i,t})$, and episode $i$ yields its
complete trajectory $\tau_i$ at termination.

\noindent\textbf{OaTTT as agent evolution.}
In online evolution, there is no separate task-specific training split or
pre-deployment adaptation. We aim to update the policy LLM model from its own
interaction experience during deployment. The current policy $\pi_i$ makes a single
attempt at each task $x_i$, producing one episode:
\begin{equation}
\tau_i\sim\mathbb P_{\pi_i,\mathcal E}(\cdot\mid x_i),
\qquad
v_i=\mathcal V(x_i,\tau_i)\in\{0,1\},
\label{eq:online_ttt_rollout}
\end{equation}
where $v_i$ is the sparse, episode-level verification outcome, returned when
episode $i$ ends and recorded before any update. The trajectory $\tau_i$ and
outcome $v_i$ then update the policy:
\begin{equation}
    (\theta_{i+1},\mathcal M_{i+1})=
\mathsf{Evolve}(\theta_i,\mathcal M_i;x_i,\tau_i,v_i).
\label{eq:state_evolution}
\end{equation}
The update operator $\mathsf{Evolve}$ is specific to each method.
In-context adaptation methods evolve the agent by updating its memory
$\mathcal M_i$ while keeping the model fixed. Because the policy in
\Eqref{eq:behavior_policy} conditions on $\mathcal M_i$, this memory update
also changes the policy. Parametric methods evolve
$\theta_i$~\citep{zweiger2025seal}, joint methods may evolve
both, and frozen methods evolve neither.
This ordering follows a prequential online
protocol~\citep{dawid1984prequential,blum1999progressive}: each reported
outcome is computed before its own trajectory can influence the policy.
In OaTTT, that scored trajectory and its verification result are also
the only learning data, with no retries or additional attempts. No replay
buffer is used in this paper.
Reported success follows each benchmark's criterion, which can be stricter
than the one defining $v_i$. Success on later tasks thus reflects how much
earlier experience helps. Observations describe the resulting states. They give
no turn-level reward or causal credit for the verified
outcome~\citep{sutton2018reinforcement,zeng2025turncredit,tan2026hcapo}.
We call this protocol the OaTTT protocol, and all online methods, in-context or
parametric, follow it. In OaTTT, the LLM's weights adapt at test time, in one
pass over the task stream, from a single multi-turn attempt per task and its
sparse episode-level verification, and the updated weights carry across tasks.

\Needspace{10\baselineskip}
\subsection{OaTTT via Direct Imitation or Reinforcement of Experience}
\label{sec:direct_ntp}

A straightforward way to perform OaTTT is to directly imitate or reinforce the
agent's own interaction trajectories. We formulate five such straightforward baseline methods in two families:
direct imitation of the generated tokens
(Online ungated imitation, Online RFT, and Online RFT+KL), and single-attempt
policy gradient, which reinforces or suppresses them with a signed advantage. Since
group-relative advantages~\citep{shao2024deepseekmath} need several attempts per
task, Online REINFORCE uses REINFORCE~\citep{williams1992reinforce} with a
cross-task baseline, and Online REINFORCE++ adapts
REINFORCE++~\citep{hu2025reinforcepp} to normalize advantages over the stream.

\begin{figure}[t!]
\centering
\includegraphics[width=\textwidth]{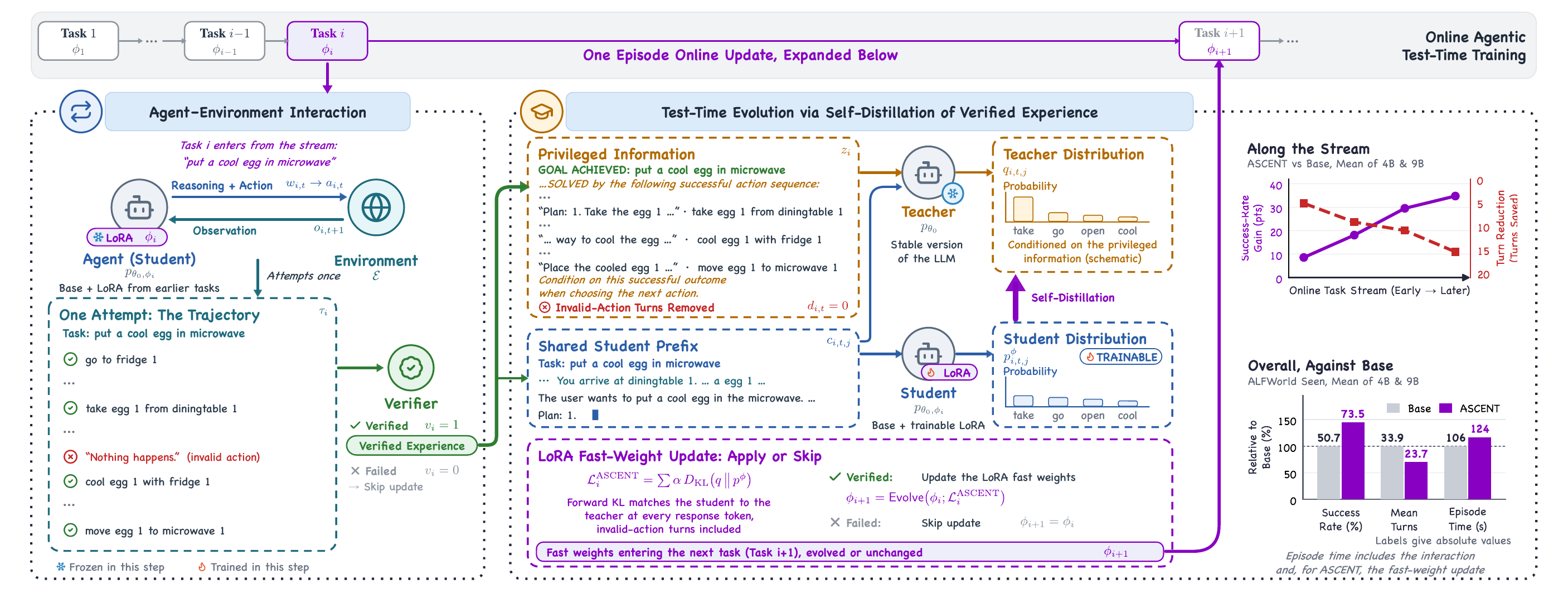}
\caption{\textbf{Online Agentic Test-Time Training of ASCENT.} The student
makes a single attempt per task. A verified trajectory becomes privileged information
for the frozen teacher, whose soft targets update the LoRA fast
weights for the next task. Charts on the right show ALFWorld seen-stream means of 4B and
9B runs (per-scale in Fig.~\ref{fig:excess_wins_intro}(a) and
Table~\ref{tab:alfworld_seen}). Episode time includes the update.}
\label{fig:ascent_overview}
\end{figure}

\noindent\textbf{Imitation and reinforcement on the generated trajectories.}
In a controlled study of OaTTT, we examine the learning dynamics of these five
methods along the task stream. For the agent being evolved, every method uses fast-weight LoRA
updates~\citep{hu2021lora}: $\theta_0$ stays frozen and a LoRA adapter is
carried across episodes, so the policy entering episode $i$ is
$p_{\theta_0,\phi_i}$. Let
$y_{i,t,1:L_{i,t}}$ be the $L_{i,t}$ tokens of response $w_{i,t}$ and
$\widehat H_{i,t}=g(H_{i,t};\mathcal M_i)$ the rendered context of the attempt,
including any retrieved context. The student prefix
$c_{i,t,j}=\widehat H_{i,t}\mathbin{\oplus}y_{i,t,<j}$ is taken from the single
attempt at $x_i$ and reused unchanged during the update. Let $\mathcal I_i$ index
the turns with nonempty responses, and let
$\alpha_{i,t,j}=(|\mathcal I_i|L_{i,t})^{-1}$ average tokens within each turn
and then across turns. Starting from $\phi=\phi_i$, all five methods update the
adapter $\phi$ with a common loss that puts a signed weight $\omega_{i,t,j}$ on
each generated token:
\begin{equation}
\mathcal L_i(\phi)
=
-\sum_{t\in\mathcal I_i}\sum_{j=1}^{L_{i,t}}
\alpha_{i,t,j}\,\omega_{i,t,j}
\log p_{\theta_0,\phi}\!\left(y_{i,t,j}\mid c_{i,t,j}\right).
\label{eq:ntp_ttt}
\end{equation}
We detail the online per-episode update of each method used in this study:
\begin{enumerate}[leftmargin=*,itemsep=1pt,topsep=2pt]
\item \textbf{Online ungated imitation}, as a naive baseline, sets $\omega_{i,t,j}=1$, so every trajectory, including failed ones, becomes a target.
\item \textbf{Online RFT}~\citep{yuan2023rft,shao2024deepseekmath} sets
$\omega_{i,t,j}=v_i$, so only verifier-accepted trajectories become training
targets and failed episodes contribute no gradient.
\item \textbf{Online RFT+KL} adds a weighted
$D_{\mathrm{KL}}(p_{\theta_0}(\cdot\mid c_{i,t,j})\,\|\,p_{\theta_0,\phi}(\cdot\mid c_{i,t,j}))$
to each term of Online RFT, anchoring the policy to the frozen base. The other
methods add no KL term to the loss.
\item \textbf{Online REINFORCE} sets $\omega_{i,t,j}=v_i-\bar v_{\le i}$, with
$\bar v_{\le i}$ the running mean verification outcome up to episode $i$, so
after a first success, failed episodes lower the likelihood of their tokens.
\item \textbf{Online REINFORCE++} sets $\omega_{i,t,j}$ to a running-normalized
token advantage (reward $+1$ for success, $-1$ for failure, minus KL penalties
to the base from that token onward) and uses the clipped surrogate
\mbox{$\min(\rho\,\omega_{i,t,j},\operatorname{clip}(\rho,1\pm\epsilon)\,\omega_{i,t,j})$},
with $\rho$ the ratio to the policy $p_{\theta_0,\phi_i}$ that made the attempt.
\end{enumerate}

All five methods learn only from the tokens the agent generated. Their weights
make each of these tokens more or less likely. They say nothing about the other
tokens at that position. Lowering a token simply moves its probability to tokens
the policy already favors, and the KL anchor of Online RFT+KL pulls toward a
base model that has not seen the episode. Every update therefore stays centered
on the generated tokens, including incidental reasoning, formatting, and
detours.

\noindent\textbf{Direct online imitation or reinforcement degrades agent execution.}
An online update rule can become \emph{unstable} when, along the stream, its
policy drifts ever further from the initial model and degrades while its task success falls
below that of the base.
On the ALFWorld stream (Fig.~\ref{fig:ntp_collapse}), Online ungated imitation
and Online RFT become nearly deterministic at the first response token while
drifting from the initial model. Online RFT+KL drifts more slowly but still
substantially (a,b). For the policy-gradient methods, successes soon stop, so
nearly every update comes from a failed episode. Online REINFORCE ends
collapsed onto first response tokens that the base finds very unlikely, while
the first-response-token entropy of Online REINFORCE++ swings above and below
the base as it drifts (a,b). All five increasingly produce invalid actions (c),
mostly in long responses (d), and over the second half of the stream they
succeed less often than the base. These straightforward baselines degrades the agent policy and collapse to lower success rate than the un-trained model.

In OaTTT, we aim to enable single-attempt, per-sample online learning (batch size 1) of LLM weights, analogous to in-context adaptation based agent evolving \cite{ouyang2025reasoningbank,yao2022react}.
In the straightforward online learning baselines, each update relies on a single execution trajectory, so sample-specific characteristics and noise are not summarized or averaged. As subsequent attempts come from the updated policy, with the gradient from direct imitation and reinforcement the generated tokens, the issue can be amplified, causing more overfitting, policy drift, and eventual degradation.
ASCENT addresses this by distilling experience with an OPSD objective, using distributions from an anchored self-teacher conditioned on hindsight from verified trajectories as targets. 

\Needspace{6\baselineskip}
\subsection{ASCENT: OaTTT via Self-Distillation of Verified Experience}
\label{sec:ascent}

\noindent\textbf{Overview.}
ASCENT stably distills verified experience from hindsight-informed predictions of the fixed initial model $p_{\theta_0}$ into LoRA fast weights $\phi_i$~\citep{hu2021lora}, evolving the agent $p_{\theta_0,\phi_i}$ during OaTTT.
In episode $i$, the agent executes $x_i$ once to produce trajectory $\tau_i$ and verification $v_i$. If $v_i=1$, the current agent is distilled into $\phi_{i+1}$ from the frozen base model conditioned on privileged information $z_i$ from the filtered trajectory. Otherwise, the update is skipped.
The updated agent produces better trajectories
enabling online self-evolution along the stream
(Fig.~\ref{fig:ascent_overview}).

\subsubsection{On-Policy Self-Distillation Objective}
\label{sec:ascent_distillation}

On-policy distillation trains on student-generated prefixes using a
teacher's next-token distributions~\citep{agarwal2023gkd}. OPSD conditions a
self-teacher on privileged information unavailable to the
student~\citep{zhao2026opsd}. ASCENT applies this asymmetry to each
verifier-accepted episode, with the frozen base model $p_{\theta_0}$ as the
teacher.

The verification outcome $v_i$ selects trajectories for adaptation. It
assigns no credit to individual turns. Updating only on accepted episodes
($v_i=1$) acts as rejection sampling~\citep{yuan2023rft,shao2024deepseekmath},
so the teacher's hindsight always comes from a trajectory that passed
verification. With binary rewards, maximizing successful-trajectory likelihood gives an unbiased policy-gradient estimate. ASCENT thus learns only from verified successes, requiring no value model or critic under sparse rewards.
From the trajectory, without an external reference solution,
we construct the privileged information $z_i$
(Sec.~\ref{sec:ascent_privileged}) and obtain the following next-token
distributions at each shared student prefix $c_{i,t,j}$:
\begin{align}
q_{i,t,j}(\cdot)
&=
\operatorname{stopgrad}\!\left[
p_{\theta_0}\!\left(
\cdot\mid z_i\mathbin{\oplus}c_{i,t,j}\right)\right],
\label{eq:ascent_teacher}\\
p_{i,t,j}^{\phi}(\cdot)
&=
p_{\theta_0,\phi}\!\left(
\cdot\mid c_{i,t,j}\right).
\label{eq:ascent_student}
\end{align}
The turn set $\mathcal I_i$, token lengths $L_{i,t}$, and weights
$\alpha_{i,t,j}$ are those of \Eqref{eq:ntp_ttt}, so ASCENT trains at the same
student prefixes as direct imitation but replaces its target. Starting from
$\phi=\phi_i$, ASCENT minimizes the forward KL to the detached teacher
distributions:
\begin{equation}
\mathcal L_i^{\mathrm{ASCENT}}(\phi)
=
\sum_{t\in\mathcal I_i}\sum_{j=1}^{L_{i,t}}
\alpha_{i,t,j}
D_{\mathrm{KL}}\!\left(
q_{i,t,j}\,\middle\|\,p_{i,t,j}^{\phi}
\right).
\label{eq:ascent_objective}
\end{equation}
The fast weights entering episode $i+1$ are
\begin{equation}
\phi_{i+1}=
\begin{cases}
\mathsf{Evolve}(\phi_i;\mathcal L_i^{\mathrm{ASCENT}}), & v_i=1,\\
\phi_i, & v_i=0,
\end{cases}
\label{eq:ascent_update}
\end{equation}
where $\mathsf{Evolve}$ is the operator from \Eqref{eq:state_evolution},
here a few optimizer steps on \Eqref{eq:ascent_objective}
(Appendix~\ref{app:implementation}). The teacher stays fixed at $p_{\theta_0}$,
so self-distillation injects each verified trajectory's experience into the
student's fast weights through a stable version of the LLM. Student updates
never change the teacher's targets for a given trajectory, which helps
mitigate student drift during online learning. Because $z_i$ covers the full horizon of the verified trajectory,
the teacher sees at each turn how the episode continued to verified success,
giving the student better-informed turn-level guidance. We next describe how
the verified trajectory supplies this hindsight and the student
prefixes.

\subsubsection{Distilling Verified and Filtered Experience Trajectories}
\label{sec:ascent_privileged}

\noindent\textbf{Hindsight from the complete trajectory.}
Episode $i$, with $T_i$ turns, yields
\begin{equation}
\tau_i=\bigl(x_i,o_{i,1},(w_{i,t},o_{i,t+1})_{t=1}^{T_i}\bigr).
\label{eq:completed_interaction}
\end{equation}
It contains the task, each generated response, and the observation returned
after its action. At a student prefix $c_{i,t,j}$, the student had not yet
seen the rest of $w_{i,t}$, the later turns and observations, or the
verification outcome. We call this later part, together with its verified
outcome, \emph{hindsight}: it shows how the episode continued from each earlier
decision to verified success~\citep{harutyunyan2019hca,tan2026hcapo}.
Although it does not identify which actions caused success, it provides the student with the successful experience as privileged information via the self-teacher.

\noindent\textbf{Action-validity filter on the privileged information.}
Even an accepted trajectory can contain turns whose response yields no
parseable action or whose action the environment does not execute, such as an
unavailable command or code that raises an error. Invalid-action signals arise naturally during execution, requiring neither additional cost nor credit assignment. ASCENT further marks them
with a validity indicator read from the environment's response at each turn:
\begin{equation}
d_{i,t}=
\begin{cases}
1, & \text{if the environment executes the action $\rho(w_{i,t})$},\\
0, & \text{otherwise, including when no action is parsed}.
\end{cases}
\label{eq:admissibility_mask}
\end{equation}
The filter records only whether an action was executed. It does not judge
whether a valid action helps the task, gives no reward, and assigns no credit.

The teacher's privileged information is then
\begin{equation}
z_i=h_{\mathrm{ASCENT}}\!\left(
x_i,\,(w_{i,t})_{t:\,d_{i,t}=1}
\right),
\label{eq:privileged_record}
\end{equation}
where the fixed map $h_{\mathrm{ASCENT}}$ serializes the task goal, a
success statement, and the reasoning--action responses of valid turns,
with actions parsed by $\rho$. By default, $z_i$ omits observation text. The
verified trajectory thus plays two roles. Its valid turns build $z_i$,
while all its nonempty responses, including those of invalid-action turns,
supply the student prefixes $c_{i,t,j}$.
Sec.~\ref{sec:privileged_ablation} ablates the content of $z_i$ and the
filter.

\noindent\textbf{ASCENT distills experience online stably through full-vocabulary distribution matching.}
In comparison to the baseline methods of Sec.~\ref{sec:direct_ntp}, ASCENT updates the same student distribution $p^{\phi}$ at the same student prefixes, with a different target.
With position indices suppressed, for the generated token $y$ (one-hot vector $\mathbf e_y$) and student logits $\bm\ell^{\phi}$, a term of
\Eqref{eq:ntp_ttt} (of the baseline) and the forward KL of \Eqref{eq:ascent_objective} (of ASCENT) have
gradients
\begin{equation}
\nabla_{\bm\ell^{\phi}}\bigl[-\omega\log p^{\phi}_{y}\bigr]
=\omega\,\bigl(p^{\phi}-\mathbf e_y\bigr),
~~~~~ \text{and}~~~~~
\nabla_{\bm\ell^{\phi}}D_{\mathrm{KL}}\bigl(q\,\|\,p^{\phi}\bigr)
=p^{\phi}-q,~~~ \text{respectively}.
\label{eq:soft_vs_onehot_gradient}
\end{equation}
Minimizing the first term pushes $y$'s logit by $\omega(1-p^{\phi}_y)$ and every
other logit $k$ by $-\omega p^{\phi}_k$, in proportion to its current probability. The methods of
Sec.~\ref{sec:direct_ntp} thus perform hard-target matching on the generated
token, where a positive weight (direct imitation, and policy gradient on successful
episodes) makes the student more confident in $y$, a negative weight (policy
gradient, mainly on failed episodes) hands $y$'s probability to tokens the
student already favors, and both tend to sharpen the distribution around the
student's own outputs.
Minimizing the second in ASCENT pushes each logit $k$ by
$q_k-p^{\phi}_k$ and stops only when $p^{\phi}=q$.
ASCENT thus performs full-vocabulary distribution matching, so alternatives the hindsight teacher prefers rise, and $y$ falls
when the teacher disagrees, for example at an invalid-action turn omitted from
$z_i$.
As a stable LLM copy (i.e., the initial checkpoint), the teacher does not track the student, preventing the target from sharpening with it. While a stable yet refreshed teacher could better follow the evolving policy with a more precisely balanced trade-off, we find the frozen initial model (the default in \cite{zhao2026opsd}), simple and effective for online streaming adaptation (Appendix~\ref{app:teacher_dynamics}).
In Fig.~\ref{fig:ntp_collapse}, ASCENT stays closest to the initial
model while keeping its valid-action rate, with a relatively more stable dynamic behavior.
Under a fixed collection distribution, the population target averages the teacher distributions of accepted trajectories through each prefix (Appendix~\ref{app:theory_transfer}).

\noindent\textbf{Potential turn efficiency in long-horizon tasks.}
Under a turn budget, an unnecessary turn uses up part of the budget, so a
response that starts a detour can be less likely to reach verified success.
Verifier selection gives such responses less weight
(Appendix~\ref{app:theory_selection}), and the hindsight teacher, which sees how
the accepted episode reached success, can potentially lower them at earlier student
prefixes. We empirically show that ASCENT has the fewest mean turns in Tables~\ref{tab:alfworld_seen},
\ref{tab:main_webshop}, \ref{tab:alfworld_unseen_transfer},
and~\ref{tab:alfworld_unseen_continual}, and fewer turns than the base along
the stream (Fig.~\ref{fig:excess_wins_intro}(a)), including on tasks that both
the un-evolved base model and ASCENT solve (Sec.~\ref{sec:turn_efficiency}). Without
turn-level credit, an accepted trajectory can still contain unnecessary turns,
and one attempt cannot show what an alternative action would have achieved
(Appendix~\ref{app:theory_detours}). We leave a theoretical understanding and complete
proof of this mechanism for future work.

\section{Experiments}
\label{sec:experiments}

\begin{table*}[t!]
\centering
\small
\setlength{\tabcolsep}{3.4pt}
\renewcommand{\arraystretch}{1.05}
\begin{tabular}{@{}l cccccc c cc@{}}
\toprule
\textbf{Method} & Pick & Look & Clean & Heat & Cool & Pick2 &
\textbf{Avg. SR} & Turns & \makecell{Time\\{\scriptsize GPU\textperiodcentered h\,/\,s}} \\
\midrule
\rowcolor{gray!22}\multicolumn{10}{l}{\textit{Qwen3.5-4B}}\\
\midrule
Base & 91.4 & 46.2 & 40.7 & 6.2 & 8.0 & 54.2 & 46.4 & 35.1 & 97 \\
\midrule
\multicolumn{10}{l}{\textbf{Offline}}\\
\quad MemP & 94.3 & 53.8 & \textbf{77.8} & 0.0 & 36.0 & 66.7 & 61.4{\scriptsize$\pm$6.5} & 28.6 & 5.2\,/\,61 \\
\quad ACE & 85.7 & 46.2 & 70.4 & 6.2 & 60.0 & 58.3 & 60.7{\scriptsize$\pm$9.2} & 31.1 & 6.3\,/\,81 \\
\quad EvoSkill & 93.3 & 17.9 & 35.8 & 6.2 & 32.0 & 66.7 & 49.8{\scriptsize$\pm$4.9} & 34.4 & 5.9\,/\,123 \\
\quad GEPA & \textbf{100.0} & 23.1 & 22.2 & \textbf{37.5} & \textbf{64.0} & 66.7 & 58.6{\scriptsize$\pm$3.4} & 34.5 & 6.3\,/\,91 \\
\quad TextGrad & 88.6 & 46.2 & 22.2 & 6.2 & 12.0 & 50.0 & 42.1{\scriptsize$\pm$4.9} & 37.5 & 8.3\,/\,104 \\
\quad Trace2Skill & 94.3 & \textbf{69.2} & 29.6 & 0.0 & 16.0 & \textbf{75.0} & 51.4{\scriptsize$\pm$6.1} & 33.0 & 5.2\,/\,92 \\
\midrule
\multicolumn{10}{l}{\textbf{Online}}\\
\quad MemP & 94.3 & 61.5 & 71.6 & 12.5 & 36.0 & 73.6 & 63.6{\scriptsize$\pm$5.9} & 29.7 & 58 \\
\quad ReasoningBank & 94.3 & 30.8 & 33.3 & 14.6 & 22.7 & 61.1 & 49.0{\scriptsize$\pm$1.5} & 35.6 & 89 \\
\quad ACE & 90.5 & 17.9 & 49.4 & 22.9 & 42.7 & 41.7 & 51.2{\scriptsize$\pm$13.8} & 34.6 & 92 \\
\quad A-Mem & 88.6 & 30.8 & 35.8 & 10.4 & 20.0 & 58.3 & 46.7{\scriptsize$\pm$5.4} & 34.5 & 79 \\
\quad MemRL & 85.7 & 23.1 & 33.3 & 12.5 & 16.0 & 62.5 & 45.0{\scriptsize$\pm$5.2} & 35.4 & 89 \\
\quad Dynamic Cheatsheet & 91.4 & 23.1 & 44.4 & 6.2 & 28.0 & 54.2 & 48.6{\scriptsize$\pm$4.1} & 36.9 & 94 \\
\quad TT-OPSD & 95.2 & 46.2 & 29.6 & 2.1 & 25.3 & 72.2 & 51.0{\scriptsize$\pm$3.7} & 32.9 & 87 \\
\rowcolor{AscentTint}\quad \ascentmethod{ASCENT} & 99.0 & \textbf{69.2} & 69.1 & 20.8 & 54.7 & \textbf{75.0} & \textbf{69.5}{\scriptsize$\pm$5.6} & \textbf{26.3} & 115 \\
\midrule
\rowcolor{gray!22}\multicolumn{10}{l}{\textit{Qwen3.5-9B}}\\
\midrule
Base & 94.3 & 61.5 & 44.4 & 0.0 & 8.0 & 91.7 & 55.0 & 32.6 & 115 \\
\midrule
\multicolumn{10}{l}{\textbf{Offline}}\\
\quad MemP & \textbf{100.0} & \textbf{92.3} & 74.1 & 18.8 & 28.0 & 83.3 & 69.3{\scriptsize$\pm$2.4} & 26.3 & 5.9\,/\,83 \\
\quad ACE & 94.3 & 53.8 & 51.9 & \textbf{37.5} & 28.0 & 79.2 & 61.4{\scriptsize$\pm$5.5} & 27.9 & 7.1\,/\,129 \\
\quad EvoSkill & 97.1 & 71.8 & 54.3 & 18.8 & 37.3 & 90.3 & 65.7{\scriptsize$\pm$3.5} & 29.0 & 4.7\,/\,145 \\
\quad GEPA & \textbf{100.0} & 76.9 & 66.7 & 25.0 & 10.0 & 91.7 & 65.4{\scriptsize$\pm$4.6} & 30.1 & 6.7\,/\,115 \\
\quad TextGrad & 91.4 & 61.5 & 59.3 & 18.8 & 44.0 & 54.2 & 59.3{\scriptsize$\pm$4.5} & 32.6 & 7.0\,/\,147 \\
\quad Trace2Skill & 94.3 & 84.6 & 44.4 & 31.2 & \textbf{60.0} & 79.2 & 67.9{\scriptsize$\pm$6.4} & 28.4 & 6.4\,/\,132 \\
\midrule
\multicolumn{10}{l}{\textbf{Online}}\\
\quad MemP & 95.2 & 74.4 & 75.3 & 16.7 & 38.7 & 80.6 & 67.9{\scriptsize$\pm$2.9} & 27.4 & 85 \\
\quad ReasoningBank & 95.2 & 59.0 & 75.3 & 22.9 & 36.0 & 76.4 & 66.0{\scriptsize$\pm$2.4} & 29.0 & 93 \\
\quad ACE & 93.3 & 51.3 & \textbf{76.5} & 29.2 & 57.3 & 75.0 & 69.3{\scriptsize$\pm$6.3} & 28.3 & 113 \\
\quad A-Mem & 95.7 & 69.2 & 53.7 & 9.4 & 22.0 & 85.4 & 60.4{\scriptsize$\pm$0.4} & 30.7 & 101 \\
\quad MemRL & 97.1 & 53.8 & 66.7 & 12.5 & 20.0 & \textbf{95.8} & 63.6{\scriptsize$\pm$2.5} & 29.5 & 110 \\
\quad Dynamic Cheatsheet & 97.1 & 61.5 & 48.1 & 0.0 & 20.0 & 83.3 & 57.1{\scriptsize$\pm$4.8} & 32.2 & 126 \\
\quad TT-OPSD & 98.1 & 79.5 & 54.3 & 12.5 & 5.3 & \textbf{95.8} & 61.2{\scriptsize$\pm$1.8} & 29.3 & 99 \\
\rowcolor{AscentTint}\quad \ascentmethod{ASCENT} & \textbf{100.0} & 89.7 & \textbf{76.5} & \textbf{37.5} & 48.0 & \textbf{95.8} & \textbf{77.4}{\scriptsize$\pm$1.8} & \textbf{21.0} & 133 \\
\bottomrule
\end{tabular}%
\caption{\textbf{ALFWorld seen-stream results.} Success rate (SR, \%) by task
type and over the full stream, mean turns per episode, and runtime, given as
training GPU-hours / evaluation seconds per episode for offline methods and
seconds per episode otherwise. Bold indicates the highest SR and fewest mean
turns within each backbone.}
\label{tab:alfworld_seen}
\end{table*}

\subsection{Experimental Setting}
\label{sec:experimental_setup}

\noindent\textbf{Benchmarks.}
We evaluate on ALFWorld~\citep{shridhar2020alfworld}, a text-based household
environment, and WebShop~\citep{yao2022webshop}, a simulated web store in which
the agent searches for and buys a product matching an instruction. The ALFWorld
seen split (140 tasks across six task families) is the main stream, and the unseen
split (134 tasks in held-out rooms and layouts) tests frozen transfer and
continued OaTTT.
AppWorld~\citep{trivedi2024appworld}, which tests interactive coding across
simulated apps, is reported in Appendix~\ref{app:appworld}. More details in
Appendix~\ref{app:experimental_details}.

\noindent\textbf{OaTTT protocol.}
All online methods follow Section~\ref{sec:method_setting}
on the same task order: one pass over the stream, a single scored attempt per
task (50-turn budget on ALFWorld and WebShop), and each task scored before its
trajectory can update the method. They receive the same environment
verification signal. Offline baselines instead
build memories, skills, or prompts from the same fixed-size training split
before evaluation. They fall outside this setting and serve as reference
points.

\noindent\textbf{Baselines.}
We compare ASCENT with the base model, offline and online
self-evolution methods (Appendix~\ref{app:baselines}), and TT-OPSD, our
test-time (TT) adaptation of OPSD~\citep{zhao2026opsd}, which was designed for
static, single-turn reasoning. Its teacher receives
only the current turn's response and returned observation as privileged
information. All other configurations are the same as ASCENT.

\noindent\textbf{Evaluation metrics.}
We report
ALFWorld exact success overall and by task family, WebShop exact success
(task score $=1$) and mean score, and mean per-episode turns and wall-clock
time. On WebShop, ASCENT's adaptation gate accepts a score of at least $0.9$.
All results are reported as mean$\pm$std over 3 independent runs.

\noindent\textbf{Implementation details.}
We use Qwen3.5-4B and Qwen3.5-9B~\citep{qwen3.5}.
The base, all compared methods, and ASCENT run with the same ReAct
template~\citep{yao2022react} and greedy decoding.
More details in Appendix~\ref{app:implementation}.

\begin{table*}[t!]
\centering\footnotesize
\setlength{\tabcolsep}{2.6pt}
\renewcommand{\arraystretch}{1.08}
\begin{tabular}{@{}l cccc cccc@{}}
\toprule
& \multicolumn{4}{c}{\textbf{Qwen3.5-4B}} & \multicolumn{4}{c}{\textbf{Qwen3.5-9B}} \\
\cmidrule(lr){2-5}\cmidrule(lr){6-9}
\textbf{Method} & \textbf{Strict-SR} & Mean-Score & Turns & \makecell{Time\\{\scriptsize GPU\textperiodcentered h\,/\,s}} & \textbf{Strict-SR} & Mean-Score & Turns & \makecell{Time\\{\scriptsize GPU\textperiodcentered h\,/\,s}} \\
\midrule
Base & 17.8 & 25.9 & 39.5 & 87 & 17.2 & 27.8 & 37.5 & 166 \\
\midrule
\multicolumn{9}{l}{\textbf{Offline}}\\
\quad MemP (offline) & 33.2{\scriptsize$\pm$1.3} & 43.7{\scriptsize$\pm$2.0} & 30.0 & 6.9 / 74 & 26.6{\scriptsize$\pm$1.3} & 40.6{\scriptsize$\pm$2.8} & 31.9 & 12.3 / 148 \\
\quad ACE (offline) & 5.8{\scriptsize$\pm$0.5} & 21.4{\scriptsize$\pm$1.1} & 42.0 & 9.4 / 122 & 13.2{\scriptsize$\pm$0.6} & 22.4{\scriptsize$\pm$1.4} & 41.6 & 16.4 / 229 \\
\quad EvoSkill & 20.0{\scriptsize$\pm$1.6} & 29.3{\scriptsize$\pm$1.9} & 38.0 & 2.06 / 84 & 20.2{\scriptsize$\pm$2.1} & 33.4{\scriptsize$\pm$1.4} & 38.2 & 2.33 / 169 \\
\quad GEPA & 23.4{\scriptsize$\pm$1.2} & 37.5{\scriptsize$\pm$2.4} & 34.0 & 8.4 / 75 & 35.4{\scriptsize$\pm$3.2} & 48.6{\scriptsize$\pm$1.6} & 30.4 & 10.1 / 135 \\
\quad TextGrad & 10.4{\scriptsize$\pm$2.1} & 30.5{\scriptsize$\pm$2.5} & 35.9 & 13.3 / 79 & 18.4{\scriptsize$\pm$2.5} & 34.7{\scriptsize$\pm$2.8} & 34.5 & 14.7 / 153 \\
\quad Trace2Skill & 14.2{\scriptsize$\pm$0.6} & 21.7{\scriptsize$\pm$1.2} & 41.6 & 7.7 / 92 & 10.0{\scriptsize$\pm$4.2} & 21.0{\scriptsize$\pm$3.6} & 40.4 & 10.2 / 179 \\
\midrule
\multicolumn{9}{l}{\textbf{Online adaptation}}\\
\quad MemP & 36.8{\scriptsize$\pm$3.4} & 47.7{\scriptsize$\pm$4.8} & 28.7 & 74 & 22.4{\scriptsize$\pm$2.5} & 38.9{\scriptsize$\pm$4.1} & 32.9 & 152 \\
\quad ReasoningBank & 10.6{\scriptsize$\pm$0.7} & 21.6{\scriptsize$\pm$1.0} & 42.1 & 176 & 12.6{\scriptsize$\pm$2.1} & 21.8{\scriptsize$\pm$1.5} & 41.5 & 226 \\
\quad ACE & 10.6{\scriptsize$\pm$3.2} & 23.6{\scriptsize$\pm$4.3} & 40.6 & 107 & 16.0{\scriptsize$\pm$1.4} & 32.0{\scriptsize$\pm$2.6} & 36.1 & 144 \\
\quad A-Mem & 23.2{\scriptsize$\pm$3.1} & 36.4{\scriptsize$\pm$2.9} & 33.9 & 98 & 15.4{\scriptsize$\pm$0.6} & 32.5{\scriptsize$\pm$4.3} & 35.6 & 176 \\
\quad MemRL & 21.8{\scriptsize$\pm$1.2} & 32.3{\scriptsize$\pm$2.5} & 37.1 & 106 & 14.0{\scriptsize$\pm$4.2} & 27.4{\scriptsize$\pm$1.7} & 38.9 & 166 \\
\quad Dynamic Cheatsheet & 16.2{\scriptsize$\pm$1.5} & 28.4{\scriptsize$\pm$2.1} & 39.4 & 133 & 13.4{\scriptsize$\pm$2.9} & 27.4{\scriptsize$\pm$3.0} & 39.1 & 211 \\
\quad TT-OPSD & 25.6{\scriptsize$\pm$0.5} & 37.1{\scriptsize$\pm$1.7} & 34.6 & 100 & 24.2{\scriptsize$\pm$2.4} & 35.9{\scriptsize$\pm$2.3} & 34.8 & 108 \\
\rowcolor{AscentTint}\quad \ascentmethod{ASCENT} & \textbf{41.9}{\scriptsize$\pm$0.3} & \textbf{62.4}{\scriptsize$\pm$1.0} & \textbf{17.8} & 76 & \textbf{41.5}{\scriptsize$\pm$1.8} & \textbf{56.8}{\scriptsize$\pm$1.7} & \textbf{26.7} & 164 \\
\bottomrule
\end{tabular}
\caption{\textbf{WebShop results.} Strict-SR (\%) denotes
\texttt{task\_score}$=1$, and Mean-Score (\%) averages
\texttt{task\_score} over each 500-task stream. Offline runtime is training
GPU-hours\,/\,evaluation seconds per episode, and other entries are seconds per
episode. Bold indicates the highest Strict-SR and Mean-Score and the fewest
turns within each backbone.}
\label{tab:main_webshop}
\end{table*}

\begin{figure}[t]
\centering
\captionsetup[subfigure]{skip=1pt}
\includegraphics{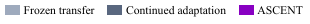}\\[1pt]
\begin{subfigure}[t]{0.49\linewidth}
\centering
\includegraphics[width=\linewidth]{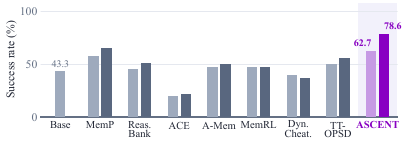}
\caption{Qwen3.5-4B}
\end{subfigure}\hfill
\begin{subfigure}[t]{0.49\linewidth}
\centering
\includegraphics[width=\linewidth]{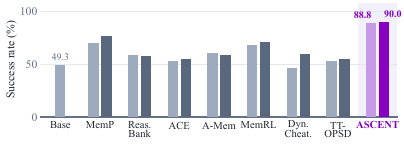}
\caption{Qwen3.5-9B}
\end{subfigure}
\caption{\textbf{Cross-scene transfer and continued OaTTT on ALFWorld unseen.}
Transfer freezes the seen-stream adaptation state, whereas continued OaTTT resumes
online updates. More in
Tables~\ref{tab:alfworld_unseen_transfer}--\ref{tab:alfworld_unseen_continual}.}
\label{fig:transfer_continual}
\end{figure}

\subsection{Main Experimental Results}
\label{sec:main_results}

\noindent\textbf{ASCENT consistently improves long-horizon task success.}
As shown in Tables~\ref{tab:alfworld_seen} and~\ref{tab:main_webshop}, ASCENT
improves exact success over the base by more than $22$ points across
ALFWorld and WebShop at both scales and exceeds the strongest online
comparator. It also achieves
the highest WebShop mean task score and outperforms every offline
in-context adaptation method on ALFWorld. ASCENT outperforms
TT-OPSD (our test-time adaptation of OPSD~\citep{zhao2026opsd}), which uses the same self-distillation but whose teacher sees only the
current turn, showing the benefit of hindsight from the complete verified
trajectory. These consistent gains show that cross-episode
parameter updates can extract reusable behavior from the same sparse
environment verification available to offline and online in-context adaptation methods.
As discussed in Sec.~\ref{sec:direct_ntp}, we omit the direct imitation and
policy-gradient baselines from these tables because they severely degrade under
OaTTT (Fig.~\ref{fig:ntp_collapse}).

\noindent\textbf{Cross-episode gains extend to difficult tasks.}
Table~\ref{tab:alfworld_seen} shows that the base rarely solves
\texttt{Heat} or \texttt{Cool}, whereas ASCENT improves both task families at
each scale. Fig.~\ref{fig:excess_wins_intro}(a) shows that ASCENT's gain over the
base is much larger on later 4B tasks, while the base remains flat.
ASCENT derives this improvement from its own verified deployment
interactions, without a stronger teacher or a pre-deployment adaptation stage.

\begin{figure}[t!]
\centering
\includegraphics[width=\textwidth]{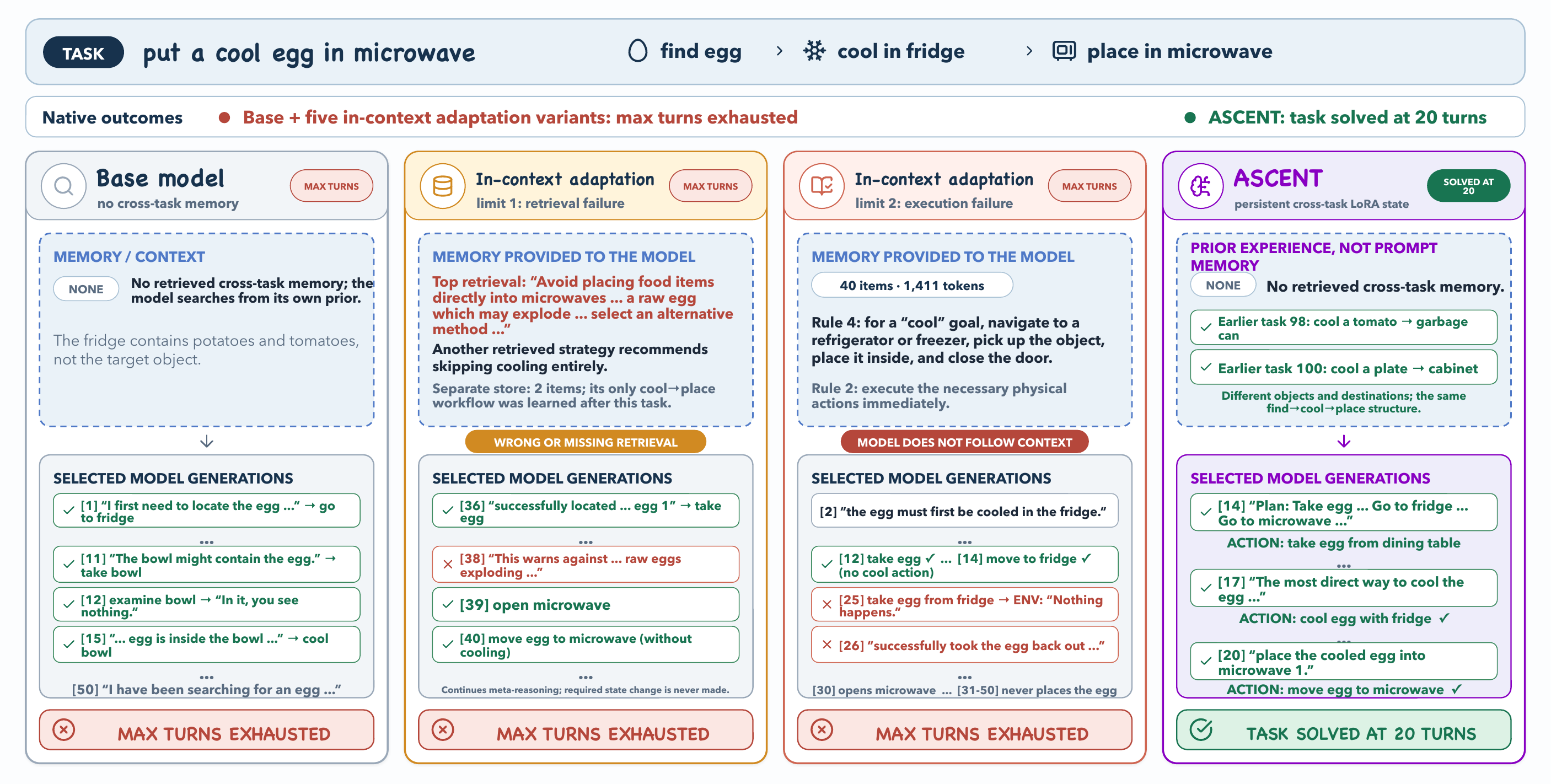}
\caption{\textbf{Quantitative analysis on Alfworld.} Columns show the base, two representative in-context
adaptation failures (irrelevant or contradictory retrieved memory, or relevant
guidance left unfollowed), and ASCENT, which carries LoRA updates from two earlier
cool-then-place tasks and retrieves no memory. Numbers are turn indices.
Checks mark valid actions, crosses mark invalid or non-action outputs,
and ellipses omit turns.}
\label{fig:g135_case}
\end{figure}

\begin{figure}[!t]
\centering
\captionsetup[subfigure]{skip=1pt}
\begin{subfigure}[t]{0.36\linewidth}
\centering
\includegraphics[width=\linewidth]{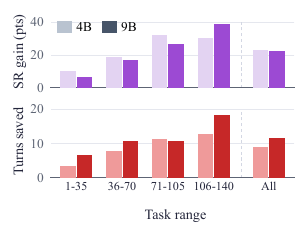}
\caption{Seen-stream diagnostics}
\end{subfigure}\hfill
\begin{subfigure}[t]{0.31\linewidth}
\centering
\includegraphics[width=\linewidth]{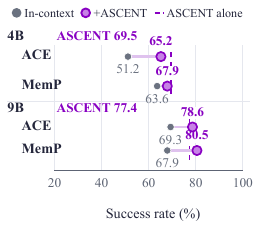}
\caption{Co-evolution}
\end{subfigure}\hfill
\begin{subfigure}[t]{0.31\linewidth}
\centering
\includegraphics[width=\linewidth]{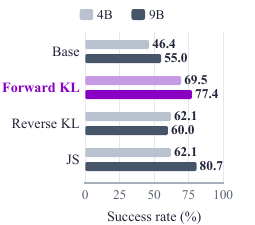}
\caption{Distillation divergence}
\end{subfigure}
\caption{\textbf{Analysis of online gains, preliminary co-evolution, and divergence ablation.} (a) Mean success-rate gain and
turns saved by ASCENT over the base, by task range and over all tasks (lighter
bars 4B, darker 9B, with turns on tasks solved by both the un-evolved base model and ASCENT in Table~\ref{tab:turn_efficiency}). (b) Preliminary in-context
and parametric co-evolution. (c) Distillation-divergence ablation,
pre-update SR (\%) at both scales, with forward KL as default.}
\label{fig:excess_wins_intro}
\end{figure}

\noindent\textbf{Fewer turns and modest online cost.}
Tables~\ref{tab:alfworld_seen} and~\ref{tab:main_webshop} show that ASCENT
reduces capped all-episode mean turns at both scales on both benchmarks. On tasks
that both the un-evolved base model and ASCENT solve, ALFWorld trajectories are
also shorter
(Sec.~\ref{sec:turn_efficiency}).
Episode time includes the attempt and LoRA update after verified successes. It
is modestly higher than the base on ALFWorld and lower on WebShop.

\noindent\textbf{The evolved policy transfers and continues learning under scene shift.}
\label{sec:unseen_results}
As shown in Fig.~\ref{fig:transfer_continual} and
Tables~\ref{tab:alfworld_unseen_transfer}--\ref{tab:alfworld_unseen_continual},
ASCENT leads the online comparators in both frozen transfer and continued
adaptation at both scales.
At 4B, continued OaTTT substantially exceeds frozen
transfer. At 9B, continuation can preserve the already strong transfer result. These results
show that seen-stream experience is retained in the evolved policy and that the
self-evolution protocol remains effective.

\subsection{Analysis and Discussion}
\label{sec:analysis_discussion}

\begin{table}[!t]
\centering
\small
\setlength{\tabcolsep}{3.5pt}
\renewcommand{\arraystretch}{1.05}
\begin{tabular}{@{}l c c c c c c c}
\toprule
& & \multicolumn{3}{c}{\textbf{Qwen3.5-4B}}
& \multicolumn{3}{c}{\textbf{Qwen3.5-9B}} \\
\cmidrule(lr){3-5}\cmidrule(l){6-8}
\textbf{Privileged information $z_i$} & \textbf{Filter}
& \textbf{SR (\%)} & \textbf{Turns} & \textbf{Valid (\%)}
& \textbf{SR (\%)} & \textbf{Turns} & \textbf{Valid (\%)} \\
\midrule
Base & -- & $46.4$ & $35.1$ & $70.1$ & $55.0$ & $32.6$ & $78.7$ \\
\midrule
& \ding{55} & $48.8$ & $32.9$ & $64.4$ & $42.9$ & $34.1$ & $66.2$ \\
\multirow{-2}{*}{Action only: $a_{i,t}$}
& \ding{51} & $48.3$ & $33.5$ & $61.2$ & $48.6$ & $32.3$ & $71.4$ \\
\cmidrule{1-8}
& \ding{55} & $62.9$ & $28.0$ & $74.6$ & $\mathbf{77.9}$ & $21.1$ & $86.6$ \\
\multirow{-2}{*}{Reasoning $+$ action $+$ observation: $(w_{i,t},o_{i,t+1})$}
& \ding{51} & $64.5$ & $28.3$ & $\mathbf{79.5}$ & $77.1$ & $23.8$ & $\mathbf{88.2}$ \\
\cmidrule{1-8}
& \ding{55} & $60.0$ & $30.6$ & $74.1$ & $75.3$ & $21.6$ & $81.8$ \\
\multirow{-2}{*}{Reasoning $+$ action: $w_{i,t}$}
& \cellcolor{AscentTint}\ding{51} & \cellcolor{AscentTint}$\mathbf{69.5}$ & \cellcolor{AscentTint}$\mathbf{26.3}$ & \cellcolor{AscentTint}$78.0$ & \cellcolor{AscentTint}$77.4$ & \cellcolor{AscentTint}$\mathbf{21.0}$ & \cellcolor{AscentTint}$85.9$ \\
\bottomrule
\end{tabular}
\caption{Privileged-information ablation on ALFWorld seen, with the
action-validity filter off (\ding{55}) or on (\ding{51}). The shaded row is ASCENT's
default (reasoning $+$ action with the filter).}
\label{tab:ablation_privileged_content}
\end{table}

\noindent\textbf{Case study on long-horizon task completion.}
\label{sec:case_study}
As shown in Fig.~\ref{fig:g135_case}, the base and five in-context
adaptation variants exhaust the turn budget on the task ``put a cool egg in
microwave,'' whereas ASCENT completes it in 20 turns. The in-context variants
fail either by retrieving irrelevant or contradictory experience or by not
following relevant guidance. ASCENT instead internalizes earlier same-family
experience in its weights and transfers it across objects and destinations.
Across whole streams, retrieving available same-type prior experience does not
reliably lead to task completion. Appendix~\ref{app:context_l1l2}
gives this retrieval analysis and the provenance of each trace.

\noindent\textbf{Interaction efficiency.}
\label{sec:turn_efficiency}
In Fig.~\ref{fig:excess_wins_intro}(a), ASCENT's turn savings over the base
grow later in the stream. They also persist on \emph{shared successes}, the
tasks that both the un-evolved base model and ASCENT solve: ASCENT uses fewer
turns than the base on $64\%$ (4B) and $67\%$ (9B) of them
(Appendix~\ref{app:turn_efficiency}, Table~\ref{tab:turn_efficiency}).

\noindent\textbf{Preliminary study on in-context and parametric co-evolution.}
\label{sec:coevo_analysis}
ASCENT stores experience in fast weights, whereas in-context adaptation stores
it in the context. In Fig.~\ref{fig:excess_wins_intro}(b), we combine ASCENT
with ACE~\citep{zhang2025ace} and MemP~\citep{fang2025memp} on ALFWorld seen. We
find that the two are complementary: adding ASCENT improves both in-context
methods at both scales. Whether retrieved memory also helps on top of ASCENT
depends on the model. On the smaller 4B model, ASCENT alone performs best, so
the extra retrieved context may distract a weaker model, in line with its
preference for shorter privileged information in
Table~\ref{tab:ablation_privileged_content}. On the relatively stronger 9B model,
both combinations exceed ASCENT alone, maybe because the model can use the
retrieved memory and its evolved weights together. Co-evolving context and
weights is thus promising for stronger agents.

\subsection{Ablation Studies}
\label{sec:ablations}

\noindent\textbf{Ablation on privileged information.}
\label{sec:privileged_ablation}
In Table~\ref{tab:ablation_privileged_content}, we study the effect of each
component of the privileged information $z_i$ (actions $a_{i,t}$,
reasoning--action responses $w_{i,t}$, and observations $o_{i,t+1}$), with and
without our proposed action-validity filter (Sec.~\ref{sec:ascent_privileged}).
We find that generally reasoning is the key component: action-only $z_i$
($a_{i,t}$) performs near or below the base, while adding reasoning ($w_{i,t}$)
consistently improves success and turn efficiency. This may be because reasoning
explains why each action was taken, such as which subgoal it serves. It also
makes up most of the tokens the student generates, while an action is only a
short command, so action-only hindsight informs few of the positions the student
is trained on.

\noindent\textbf{Smaller models benefit more from shorter, filtered privileged information.}
On the smaller 4B model, shorter
($w_{i,t}$, without observations) and carefully handled (filtered) privileged information
performs better, with higher turn efficiency (Table~\ref{tab:ablation_privileged_content}). With reasoning in $z_i$, the
filter helps most at this scale. On the relatively stronger 9B model,
providing the full trajectory ($w_{i,t}$, $o_{i,t+1}$) performs well, maybe due
to the model's stronger general language capability. Here the filter changes
success slightly while still raising the valid-action rate, possibly
because the stronger model is less affected by invalid-action turns.
Without the filter, observations may help by showing which actions the
environment flagged as invalid, so they add little once the filter removes
those turns. Our default
setting ($w_{i,t}$ with the filter) on 9B also performs comparably, with slightly
better turn efficiency.

\noindent\textbf{Ablation on distillation divergence.}
\label{sec:divergence_ablation}
As shown in Fig.~\ref{fig:excess_wins_intro}(c), ASCENT still improves on
the base at both scales when reverse KL or Jensen--Shannon (JS) divergence
replaces forward KL in \Eqref{eq:ascent_objective}. Reverse KL trails forward
KL at both scales, most at 9B. JS matches reverse KL at 4B but exceeds forward
KL at 9B. Forward KL favors coverage of the teacher distribution, whereas
reverse KL is mode-seeking~\citep{agarwal2023gkd}.

\section{Conclusion}
\label{sec:discussion}
In our Online Agentic TTT study, directly imitating or reinforcing generated tokens destabilizes
the policy. ASCENT instead self-distills verified experience into persistent LoRA
fast weights through a frozen teacher. On ALFWorld and WebShop at two model
scales, it outperforms the base and the compared baselines,
with gains that grow later in the stream and transfer to held-out scenes.

\noindent\textbf{Limitations and future work.}
ASCENT needs full next-token distributions from its frozen copy, so it applies
only to open-weight models. It gates updates on sparse episode-level
verification and weights every turn equally. Future work could approximate
finer-grained, turn-level credit. Because the teacher's privileged context contains the student's
response text, it may partially reconstruct generated tokens, which would limit the
independence of its hindsight. Isolating this effect is an open question.
The online evolution also shows potential
for environment- or user-specific agents by plugging in the matching evolved
fast weights.

\FloatBarrier
\bibliographystyle{splncs04}
\bibliography{iclr2027_conference}

\clearpage
\beginappendix
\section{Additional Method Details}
\label{app:method_details}

\subsection{What Verifier Selection and Distillation Identify}
\label{app:theory}

We separate what the verifier selects from what the distillation objective
fits. Fix a task $x$ and let $R(\tau)=\mathcal V(x,\tau)\in\{0,1\}$ be the
verification outcome that gates an update ($v_i$ in
\Eqref{eq:online_ttt_rollout}). Reported benchmark success can use a stricter
criterion. The results below concern the population of possible trajectories
under a fixed collection policy, whereas ASCENT makes a single attempt per
task in one pass over the task stream.

\subsubsection{Verifier Selection Reweights Completed Trajectories}
\label{app:theory_selection}

Let $\pi$ be the response policy that collects the trajectory, including its
decoding rule, and let $\mathbb P_\pi(\tau\mid x)$ be the resulting trajectory
distribution in the environment. If the acceptance probability
$J_x(\pi):=\Pr_{\pi,\mathcal E}(R=1\mid x)>0$, the trajectories selected for
an update follow
\begin{equation}
\mathbb P_\pi^+(\tau\mid x)
=
\frac{R(\tau)\mathbb P_\pi(\tau\mid x)}{J_x(\pi)}.
\label{eq:app_accepted_law}
\end{equation}
The selected distribution is $\mathbb P_\pi$ restricted to accepted
trajectories and renormalized. At a history $S=H_t$ just before the agent
responds, let $V^\pi(S)=\Pr_\pi(R=1\mid S)$, and let $Q^\pi(S,u)$ be the
probability of acceptance when the agent produces the complete response $u$
and then follows $\pi$. When $V^\pi(S)>0$, Bayes' rule gives
\begin{equation}
\pi^+(u\mid S)
:=
\Pr_\pi(u\mid S,R=1)
=
\pi(u\mid S)\frac{Q^\pi(S,u)}{V^\pi(S)}.
\label{eq:app_bayes_policy}
\end{equation}
Verifier selection thus reweights each response by its relative chance of
leading to acceptance~\citep{brandfonbrener2022rcsl,eysenbach2022suboptimal}.
It assigns no causal credit to individual turns of an attempt and does not
reveal the outcome of an untried response. With only one collected response at
a history, selection cannot rank alternatives there. The hindsight-conditioned
teacher supplies soft targets at the student prefixes, but whether these
targets improve later tasks must be tested empirically.

\subsubsection{Forward KL Fits Teacher Predictions at Student Prefixes}
\label{app:theory_transfer}

The student fits the teacher at the student prefixes of selected trajectories,
with the collection policy $\pi$ still fixed for task $x$. For a selected
trajectory $\tau$, let $\mathcal I(\tau)$ index the turns with nonempty
responses, which enter the student loss, and let $L_{\tau t}$ be the number of
generated tokens at turn $t$. Assume $|\mathcal I(\tau)|>0$ for selected
trajectories (otherwise condition the selected distribution on this event).
Averaging tokens within each turn and then across turns, as in
\Eqref{eq:ascent_objective}, gives the training measure
\begin{equation}
\nu_\pi(\mathrm d\tau,t,j\mid x,R=1)
=
\mathbb P_\pi^+(\mathrm d\tau\mid x)
\frac{\mathbf 1\{t\in\mathcal I(\tau)\}}{|\mathcal I(\tau)|}
\frac{\mathbf 1\{1\leq j\leq L_{\tau t}\}}{L_{\tau t}}.
\label{eq:app_training_measure}
\end{equation}
This measure draws an accepted trajectory from $\mathbb P_\pi^+$, then a turn
uniformly from $\mathcal I(\tau)$, then a token position uniformly within that
turn. Let $Z=(\tau,t,j)$ be an index drawn from it, $c_Z=c_{\tau,t,j}$ its
student prefix (defined in Sec.~\ref{sec:direct_ntp}), and $z(\tau)$ the
privileged information (defined in \Eqref{eq:privileged_record}). At this
prefix the frozen teacher supplies
$q_Z(\cdot)=p_{\theta_0}(\cdot\mid z(\tau)\mathbin{\oplus}c_Z)$, and the
population objective for a candidate adapter $\phi$ is
\begin{equation}
\mathcal L_x(\phi;\pi)
=
\mathbb E_{Z\sim\nu_\pi}
\left[
D_{\mathrm{KL}}\!\left(
q_Z\,\middle\|\,p_{\theta_0,\phi}(\cdot\mid c_Z)
\right)
\right].
\label{eq:app_population_loss}
\end{equation}
Let $\bar q(\cdot\mid c)=\mathbb E_{\nu_\pi}[q_Z(\cdot)\mid c_Z=c]$ denote the
mean teacher distribution at a student prefix $c$, which averages the teacher
distributions of the accepted trajectories through $c$, weighted as in
\Eqref{eq:ascent_objective}.

\begin{lemma}[Forward-KL population target]
\label{lem:app_forward_kl}
For any student next-token distributions at the student prefixes,
\begin{align}
\mathcal L_x(\phi;\pi)
&=
\mathbb E_{Z\sim\nu_\pi}
\left[
D_{\mathrm{KL}}\!\left(q_Z\,\middle\|\,\bar q(\cdot\mid c_Z)\right)
\right]\nonumber\\
&\quad+
\mathbb E_{Z\sim\nu_\pi}
\left[
D_{\mathrm{KL}}\!\left(
\bar q(\cdot\mid c_Z)\,\middle\|\,
p_{\theta_0,\phi}(\cdot\mid c_Z)
\right)
\right].
\label{eq:app_kl_centroid}
\end{align}
\end{lemma}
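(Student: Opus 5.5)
The plan is the standard ``bias--variance'' (Pythagorean) decomposition of forward KL around a conditional mean, applied pointwise in the student prefix. I write the population loss as a cross-entropy minus an entropy term. For each index $Z$,
\[
D_{\mathrm{KL}}(q_Z\,\|\,p)=\sum_k q_Z(k)\log q_Z(k)-\sum_k q_Z(k)\log p(k),
\]
with $p=p_{\theta_0,\phi}(\cdot\mid c_Z)$. I then add and subtract $\sum_k q_Z(k)\log\bar q(k\mid c_Z)$, which splits the summand into two pieces: $D_{\mathrm{KL}}(q_Z\,\|\,\bar q(\cdot\mid c_Z))$ and a cross term $\sum_k q_Z(k)\log\bigl(\bar q(k\mid c_Z)/p(k)\bigr)$. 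Taking $\mathbb E_{Z\sim\nu_\pi}$, the first piece is exactly the first term of \Eqref{eq:app_kl_centroid}. It remains to identify the expectation of the cross term with the second term.

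The key step is the tower property, conditioning on the student prefix $c_Z$. The crucial observation is that the factor $\log\bigl(\bar q(k\mid c_Z)/p_{\theta_0,\phi}(k\mid c_Z)\bigr)$ depends on $Z$ only through $c_Z$. This holds because the student distribution is evaluated at $c_Z$ alone; the privileged information $z(\tau)$ enters only the teacher. Hence
\[
\mathbb E\Bigl[\sum_k q_Z(k)\log\tfrac{\bar q(k\mid c_Z)}{p(k)}\Bigm| c_Z=c\Bigr]
=\sum_k \mathbb E[q_Z(k)\mid c_Z=c]\log\tfrac{\bar q(k\mid c)}{p(k\mid c)}.
\]
By the definition of $\bar q$, this equals $D_{\mathrm{KL}}(\bar q(\cdot\mid c)\,\|\,p(\cdot\mid c))$. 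Taking the outer expectation over $c_Z$ yields the second term. The identity holds for arbitrary student distributions, since nothing about $\phi$ beyond this measurability was used.

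The main obstacle is making the exchange of sums and expectations legitimate and handling infinities, rather than the algebra itself.

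\textbf{Infinite loss.} If $\mathcal L_x(\phi;\pi)=\infty$, I will show both sides are infinite. This uses the fact that the support of $\bar q(\cdot\mid c)$ is the union of the supports of the $q_Z$ with $c_Z=c$, so $p$ assigning zero mass where some $q_Z$ is positive forces the second term to be infinite as well.

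\textbf{Finite loss.} When the loss is finite, I split each summand into nonnegative parts before using Tonelli/Fubini. Alternatively, I can note that the vocabulary is finite and each prefix carries finite conditional mass.

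\textbf{Well-definedness of the conditioning.} Because the student prefixes form a countable set (finite token strings), the conditional expectation defining $\bar q$ is an elementary weighted average over accepted trajectories passing through $c$. So I will treat it as a discrete sum over $c$ with weights $\nu_\pi(c_Z=c)$, which avoids measure-theoretic subtleties. I will also adopt the conventions $0\log 0=0$ and $q_Z\ll\bar q(\cdot\mid c_Z)$, the latter holding automatically, so the first term is always finite.
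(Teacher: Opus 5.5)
Your proposal is correct and follows essentially the same route as the paper. Both condition on $c_Z=c$ and expand the KL terms. Both then use two facts: $p_{\theta_0,\phi}(\cdot\mid c_Z)$ depends on $Z$ only through $c_Z$, and $\bar q(\cdot\mid c)=\mathbb E[q_Z\mid c_Z=c]$, so that the cross term averages to $D_{\mathrm{KL}}(\bar q\,\|\,p_{\theta_0,\phi})$.

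Your extra care about infinite values goes beyond the paper's two-line sketch. The cleanest way to finish it is to prove the identity at each fixed $c$, where the finite vocabulary makes the cross term bounded. Then sum the two nonnegative terms over $c$ by Tonelli. This also covers divergence that comes from summing over prefixes, not only from a support mismatch.
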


\begin{proof}
Condition on $c_Z=c$, expand the KL terms over next tokens, and use
$\bar q(\cdot\mid c)=\mathbb E[q_Z(\cdot)\mid c_Z=c]$.
The terms involving the student distribution then agree on both sides, and
averaging over student prefixes gives \Eqref{eq:app_kl_centroid}.
\end{proof}

The first term in \Eqref{eq:app_kl_centroid} does not depend on $\phi$, so an
unrestricted minimizer matches $\bar q$ at the student prefixes covered by the
training measure. The student's responses determine which prefixes appear and
how they are weighted, while the hindsight-conditioned teacher supplies their soft
targets. The identity characterizes this target only. It does not show that
the teacher's predictions help, that a finite adapter reaches the target, or
that the fitted behavior transfers to later tasks.

\subsubsection{Verification Does Not Identify Shorter Paths}
\label{app:theory_detours}

An accepted trajectory can contain valid but unnecessary turns. Suppose
\emph{finish} succeeds now, while \emph{wait} leads to success on the next
turn. Both responses can have $Q^\pi(S,u)=1$, so verifier selection prefers
neither, and the action-validity filter keeps both when the environment executes
them.

Let $T$ be the episode length under $\pi$. Selecting accepted episodes shifts
its mean by
\begin{equation}
\mathbb E_\pi[T\mid R=1]-\mathbb E_\pi[T]
=
\frac{\operatorname{Cov}_\pi(T,R)}{J_x(\pi)}.
\label{eq:app_length_selection}
\end{equation}
The sign of this shift follows the covariance between length and acceptance,
so verification alone does not imply shorter accepted trajectories. With a
turn cap, failed episodes may run to the limit, so a drop in capped
all-episode mean turns can reflect more successes even when successful paths
are no shorter. Appendix~\ref{app:turn_efficiency} also compares turns on
shared successes, the tasks that both the un-evolved base model and ASCENT
solve. This comparison describes
turn differences on those tasks. It does not identify which individual turns
the updated policy learned to avoid.

\section{Additional Experimental Details}
\label{app:experimental_details}

\subsection{Evaluation Protocol and Comparison Controls}

All methods in a paired comparison run on the same task order. After each scored
episode, every online method receives the same environment verification
signal and may use it to update its memory, skills, or model weights. All
in-context adaptation methods start from the ReAct template that ASCENT uses
to collect its attempts. Offline self-evolution methods instead build their
adaptation state from the same fixed-size training split before evaluation.
These controls match the task stream, prompt template, and offline data
volume, but leave each method free to learn in its own way.
On WebShop, exact success requires a task score of $1$, whereas ASCENT's
adaptation gate accepts a score of at least $0.9$.

\subsection{Baseline Details}
\label{app:baselines}

\noindent\textbf{Base.}
The base is the initial LLM with the shared ReAct interaction
template~\citep{yao2022react} and makes no cross-task update.

\noindent\textbf{Offline in-context adaptation.}
MemP~\citep{fang2025memp}, ACE~\citep{zhang2025ace},
EvoSkill~\citep{alzubi2026evoskill}, GEPA~\citep{agrawal2025gepa},
TextGrad~\citep{yuksekgonul2024textgrad}, and
Trace2Skill~\citep{ni2026trace2skill} build memories, skills, or prompts from
the fixed training split before evaluation.

\noindent\textbf{Online in-context adaptation.}
MemP~\citep{fang2025memp}, ReasoningBank~\citep{ouyang2025reasoningbank},
ACE~\citep{zhang2025ace}, A-Mem~\citep{xu2025amem},
MemRL~\citep{zhang2026memrl}, and Dynamic
Cheatsheet~\citep{suzgun2025dynamiccheatsheet} update memory or skills after a
scored episode, so the new context can first affect the next task.

\begin{figure}[t]
\centering
\includegraphics[width=\linewidth]{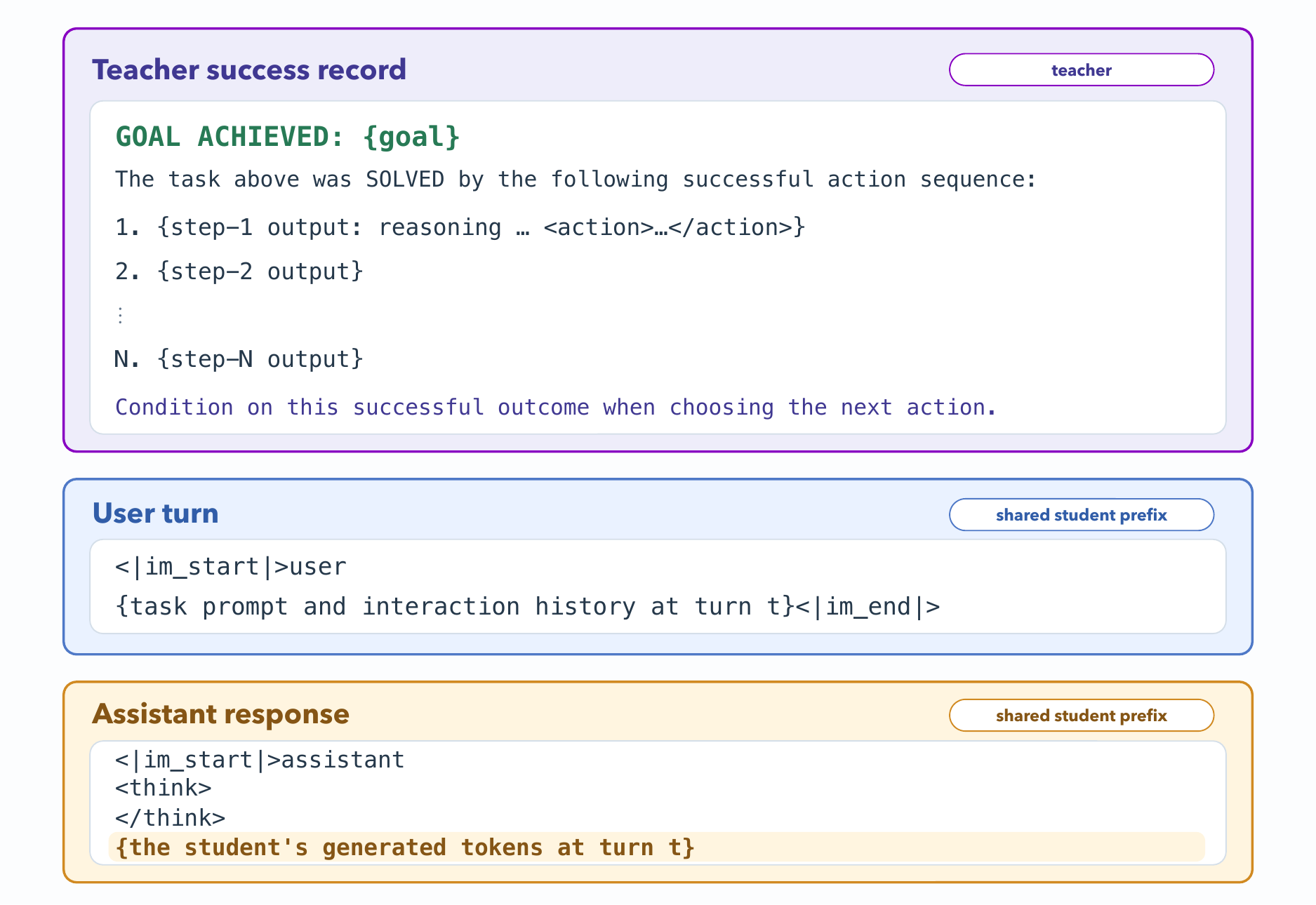}
\caption{\textbf{Teacher prompt template.} For a verifier-accepted episode, the
privileged information $z_i$ is prepended to each shared student prefix. The
empty think block reflects disabled thinking mode. Each response's reasoning is
the visible ReAct rationale before the action tag.}
\label{fig:teacher_prompt_template}
\end{figure}

\noindent\textbf{Online parametric adaptation.}
TT-OPSD is our test-time (TT) adaptation of OPSD~\citep{zhao2026opsd} to the
one-pass task stream. Its teacher receives the turn-local privileged
information $z^{\mathrm{TT}}_{i,t}=(x_i,w_{i,t},o_{i,t+1})$ together with each
student prefix $c_{i,t,j}$, so the teacher's extra information covers only the
current response and returned observation, without the completed trajectory. Its
verifier gate, student prefixes, forward-KL objective, and optimizer match
ASCENT, but it keeps invalid-action turns in accepted episodes.
Sec.~\ref{sec:direct_ntp} also studies direct imitation (Online
ungated imitation, Online RFT~\citep{yuan2023rft,shao2024deepseekmath}, and Online
RFT+KL) and single-attempt policy gradient (Online
REINFORCE~\citep{williams1992reinforce} and Online
REINFORCE++~\citep{hu2025reinforcepp}).

\noindent\textbf{Single-attempt policy gradient.}
Online REINFORCE and Online REINFORCE++ use ASCENT's LoRA configuration and
optimizer and update after each episode. Online REINFORCE++ uses a per-token KL penalty to the frozen
base with $\beta=0.01$ and ratio clipping with $\epsilon=0.2$. Where REINFORCE++ uses batch statistics, Online REINFORCE++ normalizes each token
advantage by the mean and standard deviation of all token advantages in the
stream so far, including the current episode.

\subsection{Implementation Details}
\label{app:implementation}

\noindent\textbf{Turn budget and action validity.}
Every ALFWorld and WebShop attempt, in evaluation and in online training, has a
50-turn budget. An attempt still unfinished at turn 50 stops there and is
scored as a failure. Each environment decides validity from its own response
to the action. ALFWorld executes an action only if it belongs to its current
available-action list and otherwise returns ``Nothing happens.'' We set
$d_{i,t}=1$ for listed actions and $d_{i,t}=0$ for unparsed or unavailable
ones, so a listed action that has no effect still counts as valid. WebShop
executes \texttt{search[$\cdot$]} only on a page with a search bar and
\texttt{click[$\cdot$]} only on an element of the current page. We set
$d_{i,t}=0$ for other or unparsed actions. In AppWorld, $d_{i,t}=0$ when the
turn contains no parseable code or its execution raises an error.

\begin{table*}[t!]
\centering\small
\setlength{\tabcolsep}{4.5pt}
\renewcommand{\arraystretch}{1.12}
\begin{tabular}{l cccccc c cc}
\toprule
\textbf{Method} & Pick & Look & Clean & Heat & Cool & Pick2 & \textbf{Avg.SR} & \makecell{Mean turns\\{\scriptsize(all)}} & \makecell{Time\\{\scriptsize s/ep}} \\
\midrule
\rowcolor{gray!22}\multicolumn{10}{l}{\textit{Qwen3.5-4B}}\\
\midrule
Base & 87.5 & 38.9 & 29.0 & 8.7 & 28.6 & 76.5 & 43.3 & 36.2 & 107 \\
\midrule
\multicolumn{10}{l}{\textbf{Offline}}\\
\quad MemP (offline) & 91.7 & 72.2 & 64.5 & 8.7 & 52.4 & 76.5 & 60.4{\scriptsize$\pm$4.5} & 30.0 & 69 \\
\quad ACE (offline) & 79.2 & 33.3 & 41.9 & 8.7 & 42.9 & 23.5 & 39.6{\scriptsize$\pm$3.2} & 38.5 & 106 \\
\quad EvoSkill & 83.3 & 40.7 & 29.0 & 5.8 & 47.6 & 58.8 & 43.0{\scriptsize$\pm$4.6} & 37.4 & 146 \\
\quad GEPA & 95.8 & 38.9 & 38.7 & \textbf{47.8} & \textbf{76.2} & 70.6 & 60.4{\scriptsize$\pm$4.9} & 32.8 & 95 \\
\quad TextGrad & 79.2 & 66.7 & 22.6 & 8.7 & 19.0 & 47.1 & 38.8{\scriptsize$\pm$6.4} & 38.6 & 112 \\
\quad Trace2Skill & 79.2 & 50.0 & 16.1 & 4.3 & 19.0 & 76.5 & 38.1{\scriptsize$\pm$5.3} & 39.4 & 127 \\
\midrule
\multicolumn{10}{l}{\textbf{Online-built artifacts (frozen on unseen)}}\\
\quad MemP & 87.5 & \textbf{88.9} & \textbf{80.6} & 4.3 & 42.9 & 29.4 & 57.5{\scriptsize$\pm$6.9} & 31.1 & 66 \\
\quad ReasoningBank & 87.5 & 16.7 & 41.9 & 21.7 & 52.4 & 47.1 & 45.5{\scriptsize$\pm$7.6} & 36.0 & 91 \\
\quad ACE & 70.8 & 11.1 & 3.2 & 0.0 & 4.8 & 35.3 & 20.1{\scriptsize$\pm$7.4} & 45.2 & 134 \\
\quad A-Mem & 91.7 & 38.9 & 41.9 & 8.7 & 42.9 & 58.8 & 47.0{\scriptsize$\pm$9.0} & 34.5 & 84 \\
\quad MemRL & 79.2 & 38.9 & 45.2 & 17.4 & 33.3 & 70.6 & 47.0{\scriptsize$\pm$6.1} & 35.6 & 88 \\
\quad Dynamic Cheatsheet & 79.2 & 27.8 & 12.9 & 13.0 & 38.1 & \textbf{82.4} & 39.6{\scriptsize$\pm$7.9} & 38.7 & 102 \\
\quad TT-OPSD & \textbf{98.6} & 66.7 & 35.5 & 5.8 & 33.3 & 70.6 & 50.0{\scriptsize$\pm$4.8} & 33.9 & 67 \\
\rowcolor{AscentTint}\quad \ascentmethod{ASCENT} & 86.1 & 46.3 & 72.0 & 20.3 & 69.8 & 78.4 & \textbf{62.7}{\scriptsize$\pm$7.5} & \textbf{28.4} & 94 \\
\midrule
\rowcolor{gray!22}\multicolumn{10}{l}{\textit{Qwen3.5-9B}}\\
\midrule
Base & 91.7 & 88.9 & 38.7 & 4.3 & 0.0 & 88.2 & 49.3 & 34.3 & 138 \\
\midrule
\multicolumn{10}{l}{\textbf{Offline}}\\
\quad MemP (offline) & \textbf{100.0} & \textbf{100.0} & 71.0 & 21.7 & 33.3 & 82.4 & 67.2{\scriptsize$\pm$6.7} & 27.8 & 94 \\
\quad ACE (offline) & 83.3 & 66.7 & 41.9 & 39.1 & 42.9 & 88.2 & 58.2{\scriptsize$\pm$7.3} & 30.9 & 141 \\
\quad EvoSkill & 95.8 & 70.4 & 46.2 & 23.2 & 38.1 & 90.2 & 58.7{\scriptsize$\pm$4.3} & 31.9 & 163 \\
\quad GEPA & \textbf{100.0} & 83.3 & 58.1 & 23.9 & 11.9 & 82.4 & 59.0{\scriptsize$\pm$9.0} & 30.8 & 139 \\
\quad TextGrad & \textbf{100.0} & 77.8 & 64.5 & 17.4 & 85.7 & 64.7 & 67.9{\scriptsize$\pm$4.7} & 30.3 & 173 \\
\quad Trace2Skill & \textbf{100.0} & 94.4 & 41.9 & 34.8 & 57.1 & 70.6 & 64.2{\scriptsize$\pm$5.1} & 32.4 & 157 \\
\midrule
\multicolumn{10}{l}{\textbf{Online-built artifacts (frozen on unseen)}}\\
\quad MemP & \textbf{100.0} & 83.3 & 71.0 & 13.0 & 76.2 & 82.4 & 70.1{\scriptsize$\pm$4.3} & 26.3 & 84 \\
\quad ReasoningBank & 95.8 & 66.7 & 67.7 & 26.1 & 28.6 & 64.7 & 59.0{\scriptsize$\pm$8.1} & 32.8 & 120 \\
\quad ACE & 83.3 & 66.7 & 19.4 & 17.4 & 71.4 & 82.4 & 53.0{\scriptsize$\pm$5.9} & 35.1 & 151 \\
\quad A-Mem & 95.8 & 72.2 & 45.2 & 26.1 & 47.6 & 88.2 & 60.4{\scriptsize$\pm$7.1} & 32.0 & 111 \\
\quad MemRL & \textbf{100.0} & 88.9 & 54.8 & 30.4 & 57.1 & 88.2 & 67.9{\scriptsize$\pm$6.6} & 29.5 & 107 \\
\quad Dynamic Cheatsheet & 95.8 & 61.1 & 19.4 & 21.7 & 9.5 & 88.2 & 46.3{\scriptsize$\pm$5.2} & 35.4 & 151 \\
\quad TT-OPSD & \textbf{100.0} & 72.2 & 48.4 & 8.7 & 4.8 & \textbf{96.1} & 53.2{\scriptsize$\pm$5.0} & 31.8 & 89 \\
\rowcolor{AscentTint}\quad \ascentmethod{ASCENT} & 97.2 & 81.5 & \textbf{90.3} & \textbf{72.5} & \textbf{96.8} & 94.1 & \textbf{88.8}{\scriptsize$\pm$6.6} & \textbf{17.0} & 74 \\
\bottomrule
\end{tabular}%
\caption{\textbf{ALFWorld held-out transfer.} Success rate (SR, \%) by task
type and over the full held-out stream, mean turns per episode, and runtime
(s/episode). Seen-stream artifacts remain fixed. Bold indicates the highest
SR (including ties) and fewest mean turns within each backbone.
\label{tab:alfworld_unseen_transfer}}
\end{table*}

\begin{table*}[t!]
\centering\small
\setlength{\tabcolsep}{4.5pt}
\renewcommand{\arraystretch}{1.12}
\begin{tabular}{l cccccc c cc}
\toprule
\textbf{Method} & Pick & Look & Clean & Heat & Cool & Pick2 & \makecell{\textbf{Avg.SR}\\{\scriptsize(\,$\Delta$ vs Transfer)}} & \makecell{Mean turns\\{\scriptsize(all)}} & \makecell{Time\\{\scriptsize s/ep}} \\
\midrule
\rowcolor{gray!22}\multicolumn{10}{l}{\textit{Qwen3.5-4B}}\\
\midrule
Base & 87.5 & 38.9 & 29.0 & 8.7 & 28.6 & 76.5 & 43.3 & 36.2 & 107 \\
\midrule
\multicolumn{10}{l}{\textbf{Online adaptation on unseen stream}}\\
\quad MemP & \textbf{100.0} & \textbf{94.4} & 74.2 & 21.7 & 33.3 & 64.7 & 64.9{\scriptsize$\pm$4.8}~{\scriptsize($+$7.4)} & 28.2 & 58 \\
\quad ReasoningBank & 87.5 & 33.3 & 48.4 & 26.1 & 52.4 & 52.9 & 50.7{\scriptsize$\pm$7.3}~{\scriptsize($+$5.2)} & 35.0 & 89 \\
\quad ACE & 75.0 & 11.1 & 3.2 & 0.0 & 9.5 & 35.3 & 21.6{\scriptsize$\pm$5.2}~{\scriptsize($+$1.5)} & 45.5 & 225 \\
\quad A-Mem & 87.5 & 55.6 & 51.6 & 21.7 & 28.6 & 52.9 & 50.0{\scriptsize$\pm$3.9}~{\scriptsize($+$3.0)} & 33.5 & 85 \\
\quad MemRL & 79.2 & 38.9 & 51.6 & 21.7 & 42.9 & 47.1 & 47.8{\scriptsize$\pm$8.7}~{\scriptsize($+$0.8)} & 34.3 & 86 \\
\quad Dynamic Cheatsheet & 79.2 & 61.1 & 29.0 & 0.0 & 23.8 & 35.3 & 37.3{\scriptsize$\pm$6.1}~{\scriptsize($-$2.3)} & 37.5 & 99 \\
\quad TT-OPSD & 97.2 & 81.5 & 39.8 & 15.9 & 28.6 & \textbf{90.2} & 56.2{\scriptsize$\pm$3.1}~{\scriptsize($+$6.2)} & 31.5 & 90 \\
\rowcolor{AscentTint}\quad \ascentmethod{ASCENT} & 90.3 & 83.3 & \textbf{83.9} & \textbf{53.6} & \textbf{77.8} & 82.4 & \textbf{78.6}{\scriptsize$\pm$5.1}~{\scriptsize($+$15.9)} & \textbf{21.1} & 98 \\
\midrule
\rowcolor{gray!22}\multicolumn{10}{l}{\textit{Qwen3.5-9B}}\\
\midrule
Base & 91.7 & 88.9 & 38.7 & 4.3 & 0.0 & 88.2 & 49.3 & 34.3 & 138 \\
\midrule
\multicolumn{10}{l}{\textbf{Online adaptation on unseen stream}}\\
\quad MemP & \textbf{100.0} & \textbf{100.0} & 83.9 & 13.0 & 76.2 & 94.1 & 76.9{\scriptsize$\pm$5.2}~{\scriptsize($+$6.8)} & 24.9 & 74 \\
\quad ReasoningBank & 95.8 & 66.7 & 58.1 & 17.4 & 38.1 & 70.6 & 57.5{\scriptsize$\pm$4.8}~{\scriptsize($-$1.5)} & 33.1 & 124 \\
\quad ACE & 91.7 & 55.6 & 38.7 & 13.0 & 66.7 & 76.5 & 55.2{\scriptsize$\pm$7.1}~{\scriptsize($+$2.2)} & 33.2 & 142 \\
\quad A-Mem & 83.3 & 72.2 & 64.5 & 21.7 & 33.3 & 82.4 & 59.0{\scriptsize$\pm$3.7}~{\scriptsize($-$1.4)} & 31.7 & 116 \\
\quad MemRL & 95.8 & 94.4 & 74.2 & 21.7 & 52.4 & 94.1 & 70.9{\scriptsize$\pm$7.2}~{\scriptsize($+$3.0)} & 27.0 & 99 \\
\quad Dynamic Cheatsheet & 95.8 & 61.1 & 54.8 & 17.4 & 47.6 & 88.2 & 59.7{\scriptsize$\pm$8.6}~{\scriptsize($+$13.4)} & 29.1 & 121 \\
\quad TT-OPSD & 97.2 & 85.2 & 46.2 & 13.0 & 4.8 & \textbf{96.1} & 54.7{\scriptsize$\pm$0.9}~{\scriptsize($+$1.5)} & 30.3 & 111 \\
\rowcolor{AscentTint}\quad \ascentmethod{ASCENT} & 98.6 & 92.6 & \textbf{88.2} & \textbf{72.5} & \textbf{95.2} & \textbf{96.1} & \textbf{90.0}{\scriptsize$\pm$3.6}~{\scriptsize($+$1.2)} & \textbf{15.7} & 107 \\
\bottomrule
\end{tabular}%
\caption{\textbf{ALFWorld continued OaTTT.} Pre-update success rate
(SR, \%) by task type and over the held-out stream, mean turns per episode,
and runtime (s/episode). $\Delta$ is the change from frozen transfer in
Table~\ref{tab:alfworld_unseen_transfer}. Bold indicates the highest SR
(including ties) and fewest mean turns within each backbone.
\label{tab:alfworld_unseen_continual}}
\end{table*}

\noindent\textbf{Teacher input.}
The shared student prefix $c_{i,t,j}$ includes any retrieved context used
during the attempt. The student reads $c_{i,t,j}$, whereas the teacher
reads $z_i\oplus c_{i,t,j}$, with teacher token positions shifted by $|z_i|$.
The trajectory text in $z_i$ conditions the teacher. It is not itself a
prediction target. Fig.~\ref{fig:teacher_prompt_template} shows this
arrangement for reasoning--action responses.

\noindent\textbf{Optimization.}
The teacher is a frozen copy of the initial LLM, $p_{\theta_0}$, so teacher
inference simply disables the student's LoRA adapter and returns detached
full-vocabulary logits. LoRA adapts the \texttt{down\_proj} modules with rank
16 and scale 32. After each verifier-accepted episode, ASCENT takes two
optimizer steps on the forward-KL loss of \Eqref{eq:ascent_objective} with
AdamW~\citep{loshchilov2019adamw} and learning rate $1.5\times10^{-4}$. The
adapter and optimizer state carry across tasks, while the trajectory is discarded
after its update. All experiments run on a single NVIDIA H100 GPU.

\subsection{Full Benchmark Results}
\label{app:full_results}

Tables~\ref{tab:alfworld_unseen_transfer}--\ref{tab:alfworld_unseen_continual}
report the full ALFWorld unseen results for frozen transfer and continued OaTTT,
which Fig.~\ref{fig:transfer_continual} summarizes in the main paper. The full
ALFWorld seen and WebShop results, including offline methods, are in the main
paper (Tables~\ref{tab:alfworld_seen} and~\ref{tab:main_webshop}).

\subsection{Additional Evaluation on AppWorld}
\label{app:appworld}

\begin{table}[!tb]
\centering\small
\setlength{\tabcolsep}{9pt}
\renewcommand{\arraystretch}{1.08}
\begin{tabular}{@{}l cccc@{}}
\toprule
& \multicolumn{2}{c}{\textbf{Test-Normal}}
& \multicolumn{2}{c}{\textbf{Test-Challenge}} \\
\cmidrule(lr){2-3}\cmidrule(l){4-5}
\textbf{Method} & \textbf{TGC} & \textbf{SGC}
& \textbf{TGC} & \textbf{SGC} \\
\midrule
Base & 20.2 & 5.4 & 15.8 & \textbf{5.8} \\
\midrule
MemP & 22.0{\scriptsize$\pm$2.1} & 1.8{\scriptsize$\pm$0.2} & 16.1{\scriptsize$\pm$3.3} & \textbf{5.8}{\scriptsize$\pm$}0.7 \\
ReasoningBank & 19.6{\scriptsize$\pm$1.9} & 3.6{\scriptsize$\pm$0.7} & 19.2{\scriptsize$\pm$2.8} & 4.3{\scriptsize$\pm$1.1} \\
ACE & 22.0{\scriptsize$\pm$4.1} & 3.6{\scriptsize$\pm$1.2} & 14.4{\scriptsize$\pm$3.9} & 2.2{\scriptsize$\pm$2.0} \\
A-Mem & 14.3{\scriptsize$\pm$0.4} & 8.9{\scriptsize$\pm$1.2} & 11.3{\scriptsize$\pm$1.1} & 2.2{\scriptsize$\pm$0.4} \\
Dynamic Cheatsheet & 18.5{\scriptsize$\pm$2.3} & 1.8{\scriptsize$\pm$0.6} & 14.9{\scriptsize$\pm$1.4} & 2.9{\scriptsize$\pm$0.5} \\
TT-OPSD & 19.1{\scriptsize$\pm$3.2} & 5.7{\scriptsize$\pm$0.8} & 16.2{\scriptsize$\pm$2.0} & 3.8{\scriptsize$\pm$1.2} \\
\rowcolor{AscentTint}\ascentmethod{ASCENT}
& \textbf{26.2}{\scriptsize$\pm$1.3} & \textbf{9.5}{\scriptsize$\pm$0.9} & \textbf{20.0}{\scriptsize$\pm$2.5} & 5.3{\scriptsize$\pm$0.9} \\
\bottomrule
\end{tabular}
\caption{\textbf{AppWorld online results.} Task-goal completion
(TGC, \%) and scenario-goal completion (SGC, \%) on Test-Normal and
Test-Challenge with Qwen3.5-4B.}
\label{tab:appworld}
\end{table}

AppWorld~\citep{trivedi2024appworld} tests interactive code generation across
simulated applications. We evaluate Qwen3.5-4B on Test-Normal (168 tasks in 56
scenarios) and Test-Challenge (417 tasks in 139 scenarios). Each task has one
scored attempt with a 30-interaction budget, and online methods receive the
episode's verification outcome only after that attempt. Task Goal Completion
(TGC) counts tasks that pass every state-based test, and Scenario Goal
Completion (SGC) counts scenarios in which every task variant succeeds.
Table~\ref{tab:appworld} reports these official metrics under the OaTTT
protocol.

\noindent\textbf{Task-level gains extend to interactive coding.}
As shown in Table~\ref{tab:appworld}, ASCENT improves TGC over the base on both
splits and leads every online comparator in the table on Test-Normal TGC and
SGC. On Test-Challenge, its task-level gain does not translate into a
scenario-level lead.

\section{Additional Analyses and Results}
\label{app:additional_results}

\subsection{Teacher Evolution in OaTTT}
\label{app:teacher_dynamics}

ASCENT keeps the teacher fixed while the student changes. To analyze this
choice, we study teacher evolution in OaTTT: the teacher gets its own adapter
$\bar\phi_i$, which follows the student.
Let $Q_{\bar\phi}(\cdot\mid z,c)=p_{\theta_0,\bar\phi}(\cdot\mid z\mathbin{\oplus}c)$
denote this teacher at a student prefix $c$ with privileged information $z$,
and let $Q_{\bar\phi_i}(\tau_i)$ collect its targets at the student prefixes of
an accepted episode. With token indices suppressed, episode $i$ gives
\begin{equation}
\tau_i\sim P_{\phi_i},\qquad
q_i=Q_{\bar\phi_i}(\tau_i),\qquad
\phi_{i+1}=\mathsf{Evolve}\bigl(\phi_i;\mathcal L_i^{\mathrm{ASCENT}}[q_i]\bigr).
\label{eq:teacher_tracking_system}
\end{equation}
Here $P_{\phi_i}$ is the trajectory distribution of the student
$p_{\theta_0,\phi_i}$ on task $x_i$, $\mathsf{Evolve}$ is the operator in
\Eqref{eq:ascent_update}, and $\mathcal L_i^{\mathrm{ASCENT}}[q_i]$ is
\Eqref{eq:ascent_objective} with targets $q_i$. ASCENT sets $\bar\phi_i=0$, so
its teacher is always $p_{\theta_0}$.

\noindent\textbf{EMA and snapshot teachers.}
The evolving teachers keep the hindsight context, loss, and optimizer of ASCENT
and differ from it only in letting $\bar\phi$ follow the student. An
exponential moving average (EMA) teacher applies
$\bar\phi\leftarrow\beta\bar\phi+(1-\beta)\phi$ after every optimizer step. A
periodic snapshot teacher starts at $\bar\phi=0$ and copies the student's
adapter after every $N$-th update (one update per accepted episode), like the
target network of deep Q-learning~\citep{mnih2015human}. A fixed teacher still
gives different targets across episodes, because $z_i$ and the student prefixes
change. An evolving teacher also changes the rule that turns this evidence into
supervision. We run these variants with Qwen3.5-4B on the ALFWorld seen stream
(Table~\ref{tab:teacher_target_dynamics}).

\begin{table}[t]
\centering\small
\setlength{\tabcolsep}{4.5pt}
\renewcommand{\arraystretch}{1.08}
\begin{tabular}{l c c c}
\toprule
\textbf{Teacher} & Schedule & \textbf{SR (\%)} &
\makecell{Early$\rightarrow$late\\SR (\%)} \\
\midrule
\multicolumn{4}{l}{\textbf{Fixed}}\\
\rowcolor{AscentTint}\quad \ascentmethod{ASCENT} (frozen $\pi_0$) & fixed & \textbf{69.5} & 59.0$\rightarrow$75.2 \\
\midrule
\multicolumn{4}{l}{\textbf{EMA of the student}}\\
\quad EMA & $\beta=0.999$ & 61.4 & 48.6$\rightarrow$77.1 \\
\quad EMA & $\beta=0.99$ & 57.1 & 54.3$\rightarrow$57.1 \\
\quad EMA & $\beta=0.90$ & 24.3 & 48.6$\rightarrow$2.9 \\
\midrule
\multicolumn{4}{l}{\textbf{Periodic student snapshot}}\\
\quad Snapshot & $N=30$ & 55.7 & 51.4$\rightarrow$68.6 \\
\quad Snapshot & $N=20$ & 63.6 & 57.1$\rightarrow$62.9 \\
\quad Snapshot & $N=10$ & 45.7 & 60.0$\rightarrow$25.7 \\
\bottomrule
\end{tabular}
\caption{\textbf{Analysis of teacher evolution in OaTTT on ALFWorld seen.} The snapshot
teacher is a copy of the student that is refreshed every $N$ updates (the
frozen $\pi_0$ before the first refresh). Early and late refer to tasks 1--35
and 106--140, respectively.}
\label{tab:teacher_target_dynamics}
\end{table}

\begin{figure}[t]
\centering
\captionsetup[subfigure]{skip=1pt}
\includegraphics[width=\linewidth]{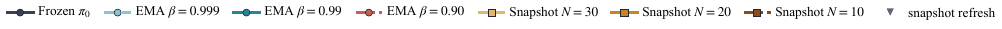}\\[1pt]
\begin{subfigure}[t]{0.325\linewidth}
\centering
\includegraphics[width=\linewidth]{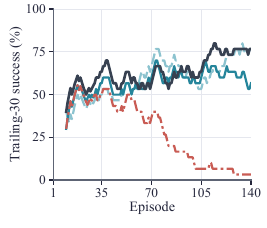}
\caption{EMA teachers}
\end{subfigure}\hfill
\begin{subfigure}[t]{0.325\linewidth}
\centering
\includegraphics[width=\linewidth]{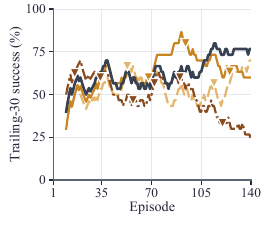}
\caption{Periodic snapshots}
\end{subfigure}\hfill
\begin{subfigure}[t]{0.325\linewidth}
\centering
\includegraphics[width=\linewidth]{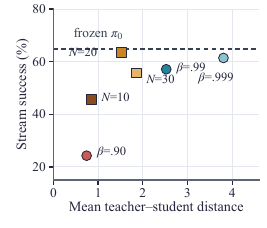}
\caption{Success vs.\ teacher distance}
\end{subfigure}
\caption{\textbf{Dynamics of teacher evolution in OaTTT on ALFWorld seen.} Qwen3.5-4B with
evolving teachers: (a,b) trailing 30-episode success against the frozen teacher
in the same setup, with triangles at snapshot refreshes, and (c) stream success
against the mean teacher--student adapter distance $\|\bar\phi-\phi\|_2$ before
each update, with the frozen teacher as the dashed line.}
\label{fig:teacher_evo}
\end{figure}

\noindent\textbf{Faster teacher evolution strengthens the feedback loop.}
An evolving teacher returns student drift as supervision on later tasks.
Stop-gradient blocks this path only within an update, leaving it open across tasks. As shown
in Table~\ref{tab:teacher_target_dynamics}, the fastest-evolving teachers (EMA
with $\beta=0.90$ and snapshots with $N=10$) lose success along the stream,
whereas slower ones hold or improve. The slowest EMA ($\beta=0.999$) ends the
stream level with the frozen teacher in the same setup
(Fig.~\ref{fig:teacher_evo}(a)). No evolving teacher matches that teacher
over the whole stream. Teacher evolution thus need not lose success along the stream, and here it
brings no net gain. These results order the schedules without locating the
evolution rate at which success starts to fall, because $\beta$ or $N$ alone does not set the lag in
episodes. An EMA remembers about $(1-\beta)^{-1}$ optimizer steps, about half as
many accepted updates, and because the verifier gates updates, the lag of
either teacher in episodes also depends on the success rate.

\noindent\textbf{Success falls when the teacher stays close to the student.}
Fig.~\ref{fig:teacher_evo}(c) relates stream success to how far the teacher's
adapter stays from the student's. The two teachers closest to the student, EMA
with $\beta=0.90$ and snapshots refreshed every 10 updates, succeed least. Once
the mean distance exceeds about $1.5$, success levels off between $56\%$ and
$64\%$, below the $65\%$ of the frozen teacher in the same setup. The snapshots
show this loop over time (Fig.~\ref{fig:teacher_evo}(b)): each refresh hands the
teacher the student's latest weights, and with frequent refreshes success
declines through the second half of the stream.

\noindent\textbf{Gating and filtering do not remove the second feedback path of an evolving teacher.}
The verification outcome selects episodes and assigns no credit to individual
turns. The action-validity filter removes invalid-action turns from $z_i$
and cannot judge the remaining valid turns. With an evolving teacher, those
turns also shape the model that supplies later targets, a second feedback path
that neither gating nor filtering removes.

\noindent\textbf{Frozen and evolving teachers define different targets.}
Let $\mu_{\phi}^{+}(z\mid c)$ be the distribution of privileged information
from verifier-accepted trajectories of the student with adapter $\phi$ at a
student prefix $c$. Under a fixed collection distribution
(Appendix~\ref{app:theory_transfer}), an unrestricted student fitted by forward
KL to the frozen teacher $Q_0$ ($\bar\phi=0$) has the centroid solution
\begin{equation}
p^{*}(\cdot\mid c)
=\mathbb E_{z\sim\mu_{\phi}^{+}(\cdot\mid c)}
  \left[Q_{0}(\cdot\mid z,c)\right].
\label{eq:frozen_teacher_centroid}
\end{equation}
The selected $z$ may shift as the student evolves, but the same frozen teacher
$p_{\theta_0}$ always interprets it. When the teacher evolves with the student, both
the selected $z$ and its interpretation depend on the student, and a
self-consistent solution satisfies
\begin{equation}
p_{\phi^{*}}(\cdot\mid c)
=\mathbb E_{z\sim\mu_{\phi^{*}}^{+}(\cdot\mid c)}
  \left[Q_{\phi^{*}}(\cdot\mid z,c)\right].
\label{eq:moving_teacher_fixed_point}
\end{equation}
This self-consistency condition does not establish convergence. A policy can
satisfy it without succeeding in the environment, because the policy being
trained determines both the selected information and its interpretation.
Direct imitation is the extreme case, with one-hot targets that give no
mass to alternative tokens.

\noindent\textbf{Low distillation loss need not mean task success.}
Forward KL measures agreement with the teacher and ignores task success.
It can also cover only the alternatives its teacher retains. In Online
ungated imitation, the zero-lag extreme in which the student imitates its own
generated tokens, the student's entropy on fixed prompts approaches zero as
drift from the initial policy grows. Teacher entropy was not logged, so these
observations show that the student collapses and leave open whether teacher
entropy collapse passes to the student.

\noindent\textbf{A KL anchor limits drift and supplies no episode-conditioned targets.}
Online RFT+KL limits some drift, but its success stays well below that of
ASCENT with the frozen privileged teacher. Its KL anchor keeps the student
closer to the base, whereas the privileged teacher also supplies
episode-conditioned soft targets.

\noindent\textbf{Collapsed behavior does not require a small adapter.}
At an endpoint, the collapsed direct imitation checkpoint has a lower
adapter stable rank and a next-token gradient about $25\times$ larger than the
frozen-teacher checkpoint. Online ungated imitation also ends with a larger adapter
norm than the frozen teacher, while its valid-action rate falls. Student entropy approaches zero, repeated actions become
more common, and executable actions become rare. These are behavioral and
endpoint associations: stable rank does not establish the cause of collapse,
and adapter norm or probe gradient neither measures the realized update nor
identifies a LoRA capacity limit.

\noindent\textbf{ASCENT changes the evidence and keeps the teacher fixed.}
ASCENT's targets vary with $z_i$ and $c_{i,t,j}$, while the teacher
$p_{\theta_0}$ that interprets them stays the same. Slowly evolving teachers can
hold their success and even improve late in the stream. None exceeds the frozen
teacher overall. Updating both models after every episode lets drift compound
as in Sec.~\ref{sec:direct_ntp}, now through the targets as well. Evolving the
teacher only in stages, after a verified gain of the student, could let the
supervision improve without this loop.

\subsection{Turn Efficiency under Success Composition}
\label{app:turn_efficiency}
In the ALFWorld seen-stream runs, failed episodes use all $T_{\max}=50$
turns, so the capped all-episode mean decomposes as
\begin{equation}
\mathbb E[T]
=\Pr(W)\,\mathbb E[T\mid W]
+\bigl(1-\Pr(W)\bigr)T_{\max},
\label{eq:turn_composition}
\end{equation}
where $W$ denotes episode success. A policy with higher success therefore has
fewer mean turns even if its successful trajectories are no shorter. To
compare turn use apart from this success composition, we pair each of the
3 ASCENT runs per scale in Table~\ref{tab:alfworld_seen} with the base on the
same task order.

Among successful episodes, the base and ASCENT average $17.9$ and $15.8$
turns at 4B, and $18.4$ and $12.6$ at 9B. These means still cover different
sets of solved tasks, so Table~\ref{tab:turn_efficiency} compares turn counts
only on tasks that both the un-evolved base model and ASCENT solve (shared
successes).

\begin{table}[H]
\centering
\small
\setlength{\tabcolsep}{4.5pt}
\renewcommand{\arraystretch}{1.08}
\begin{tabular}{@{}lrrrrrr@{}}
\toprule
\textbf{Scale} & \textbf{Both won} & \textbf{ASCENT only} &
\textbf{Base only} & \makecell{\textbf{Mean}\\$\Delta T$} &
\makecell{\textbf{Median}\\$\Delta T$} &
\makecell{\textbf{ASCENT uses}\\\textbf{fewer turns (\%)}} \\
\midrule
4B & $61.7${\scriptsize$\pm$1.2} & $35.7${\scriptsize$\pm$6.6} &
$3.3${\scriptsize$\pm$1.2} & $-4.8${\scriptsize$\pm$0.9} &
$-2.3${\scriptsize$\pm$0.5} & $63.8${\scriptsize$\pm$1.8} \\
9B & $74.3${\scriptsize$\pm$0.5} & $34.0${\scriptsize$\pm$2.2} &
$2.7${\scriptsize$\pm$0.5} & $-7.0${\scriptsize$\pm$0.3} &
$-3.5${\scriptsize$\pm$0.4} & $67.3${\scriptsize$\pm$4.7} \\
\bottomrule
\end{tabular}
\caption{\textbf{Shared-success turn comparison on ALFWorld seen.}
$\Delta T=T_{\mathrm{ASCENT}}-T_{\mathrm{Base}}$ on tasks solved by both
the un-evolved base model and ASCENT, so negative values favor ASCENT. The first
three columns count the tasks in the 140-task stream solved by both, by ASCENT
only, and by the base only. The last column is the percentage of shared successes
on which ASCENT uses fewer turns than the base. Entries are mean$\pm$std of
per-run statistics over the 3 paired runs per scale.}
\label{tab:turn_efficiency}
\end{table}

\noindent\textbf{ASCENT uses fewer turns on most shared successes.}
Table~\ref{tab:turn_efficiency} shows that, on these shared successes, the
mean and median $\Delta T$ are negative at both scales, and ASCENT uses fewer
turns than the base on most of these tasks. The paired comparison thus supports
an interaction-efficiency gain beyond the change in the success/failure mix.
It identifies neither the individual turns avoided nor the cause of
each shorter path.

\subsection{Retrieval and Execution for In-Context Adaptation}
\label{app:context_l1l2}

\begin{figure}[t]
\centering
\captionsetup[subfigure]{skip=0pt}
{\footnotesize
\textcolor[HTML]{B9C3D0}{\rule[.2ex]{1.3em}{.5em}}\, Qwen3.5-4B\quad
\textcolor[HTML]{475569}{\rule[.2ex]{1.3em}{.5em}}\, Qwen3.5-9B\par}
\vspace{2pt}
\begin{subfigure}[t]{0.49\linewidth}
\centering
\includegraphics[width=\linewidth]{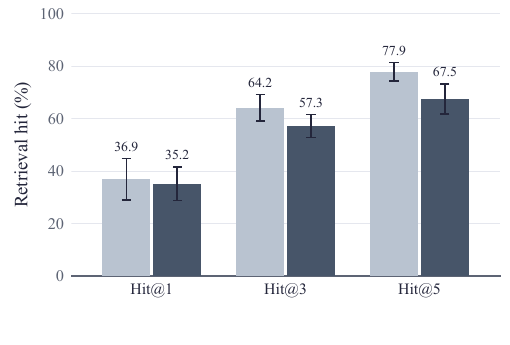}
\caption{Retrieval hit when a prior same-type success exists}
\end{subfigure}\hfill
\begin{subfigure}[t]{0.49\linewidth}
\centering
\includegraphics[width=\linewidth]{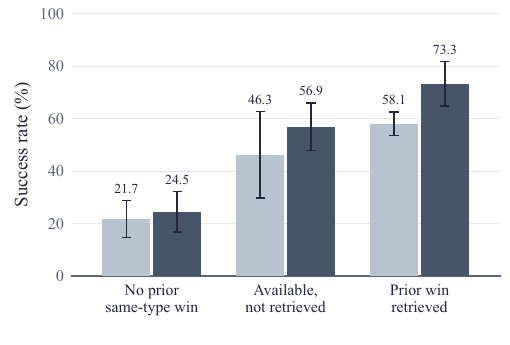}
\caption{Success conditioned on memory state}
\end{subfigure}
\caption{\textbf{Memory retrieval and success on ALFWorld seen.} For the
retrieval-based in-context adaptation baseline, bars show means with std
error bars: (a) retrieval of an available prior same-type success, and (b)
success by memory availability and retrieval.}
\label{fig:context_l1l2}
\end{figure}
We analyze a retrieval-based in-context adaptation baseline on three 140-task
ALFWorld seen streams per scale. Before task $i$, let $E_i=1$ when memory
holds an item from an earlier verified success of the same task type, and let
$R_i^{(k)}=1$ when at least one such item appears among the top-$k$ retrieved
memories. Fig.~\ref{fig:context_l1l2}(a) reports the retrieval hit rate
$\Pr(R_i^{(k)}=1\mid E_i=1)$. Fig.~\ref{fig:context_l1l2}(b) splits episode
success by whether a prior same-type success is absent ($E_i=0$), available
and missed by retrieval ($E_i=1,R_i^{(5)}=0$), or retrieved ($R_i^{(5)}=1$). Logged
and reconstructed retrieval counts agree for all 840 episodes.

\noindent\textbf{Retrieval often misses available experience and does not ensure success.}
As shown in Fig.~\ref{fig:context_l1l2}(a,b), the top five results often miss
an available prior same-type success, and retrieval is associated with higher
success. Retrieved episodes can still fail. Same task type is only a proxy for relevance,
so a retrieved item may not supply the instruction the current task needs. A
failure after retrieval therefore cannot by itself show that the agent failed
to execute useful guidance. The case study in Sec.~\ref{sec:case_study}
(Fig.~\ref{fig:g135_case}) examines concrete traces behind these aggregate
patterns. The paragraphs below record the provenance of each trace.

\noindent\textbf{Case-study protocol.}
The task in Fig.~\ref{fig:g135_case} is ALFWorld seen-stream episode
\texttt{game\_idx=135}: ``put a cool egg in microwave.'' Each method uses
Qwen3.5-4B and has one scored attempt with a shared 50-turn budget. Online
methods follow the same task order, whereas offline in-context adaptation
variants use fixed preconstructed banks. We keep these scored outcomes
separate from the later trace diagnostics shown in the figure.

\noindent\textbf{Prior in-context experience.}
Online MemP acquires a cool-then-place workflow at episode 138, so this
workflow is not yet available at episode 135. The retrieval baseline analyzed
above has 408 memories in its post-episode bank. Restricting the bank to
entries with \texttt{src\_idx<135} leaves 405 memories, and retrieval from
this pre-task bank returns the contradictory top item shown in
Fig.~\ref{fig:g135_case}.

\noindent\textbf{Prior parametric updates.}
Before episode 135, ASCENT updates on two verifier-accepted episodes from the
\texttt{pick\_cool\_then\_place\_in\_recep} family: episode 98 (cool a tomato
and place it in a garbage can) and episode 100 (cool a plate and place it in a
cabinet). The fast weights entering episode 135 may reflect these updates,
while episode 135 itself is scored before its own trajectory can update them.

\end{document}